\theoremstyle{plain}
\newtheorem{theorem}{Theorem}[section]
\newtheorem{proposition}[theorem]{Proposition}
\newtheorem{lemma}[theorem]{Lemma}
\theoremstyle{definition}
\theoremstyle{remark}
\newcommand{\modelname}{GeoMancer}
\title{Toward a Unified Geometry Understanding: Riemannian Diffusion Framework for Graph Generation and Prediction}
\author{%
  {Yisen Gao$^{1,2,4}$, Xingcheng Fu$^{1}$\thanks{Corresponding author}, Qingyun Sun$^{3,4}$, Jianxin Li$^{4}$, Xianxian Li$^{1}$} \\
  $^{1}$Key Lab of Education Blockchain and Intelligent Technology,  Guangxi Normal University\\ 
  $^{2}$Computer Science and Engineering, The Hong Kong University of Science and Technology\\
  $^{3}$Guangxi Key Lab of Multi-source Information Mining \& Security, Guangxi Normal University\\
  $^{4}$School of Computer Science and Engineering, Beihang University\\
  \texttt{ygaodi@cse.ust.hk, \{fuxc, lixx\}@gxnu.edu.cn, \{sunqy,lijx\}@buaa.edu.cn} \\
}
\begin{document}

\maketitle

\begin{abstract}
Graph diffusion models have made significant progress in learning structured graph data and have demonstrated strong potential for predictive tasks. Existing approaches typically embed node, edge, and graph-level features into a unified latent space, modeling prediction tasks including classification and regression as a form of conditional generation.   
However, due to the non-Euclidean nature of graph data, features of different curvatures are entangled in the same latent space without releasing their geometric potential. To address this issue, we aim to construt an ideal Riemannian diffusion model to capture distinct manifold signatures of complex graph data and learn their distribution.  This goal faces two challenges: numerical instability caused by exponential mapping during the encoding proces and manifold deviation during diffusion generation.   
To address these challenges, we propose \textbf{GeoMancer}: a novel Riemannian graph diffusion framework for both generation and prediction tasks. 
To mitigate numerical instability, we replace exponential mapping with an isometric-invariant Riemannian gyrokernel approach and decouple multi-level features onto their respective task-specific manifolds to learn optimal representations.  To address manifold deviation, we introduce a manifold-constrained diffusion method and a self-guided strategy for unconditional generation, ensuring that the generated data remains aligned with the manifold signature.
Extensive experiments validate the effectiveness of our approach, demonstrating superior performance across a variety of tasks.
\end{abstract}

\section{Introduction}

\begin{figure*}[htbp]
\centering
\subfigure[Overview of Graph Diffusion Model]{\label{fig:feature}
\includegraphics[width=0.48\textwidth]{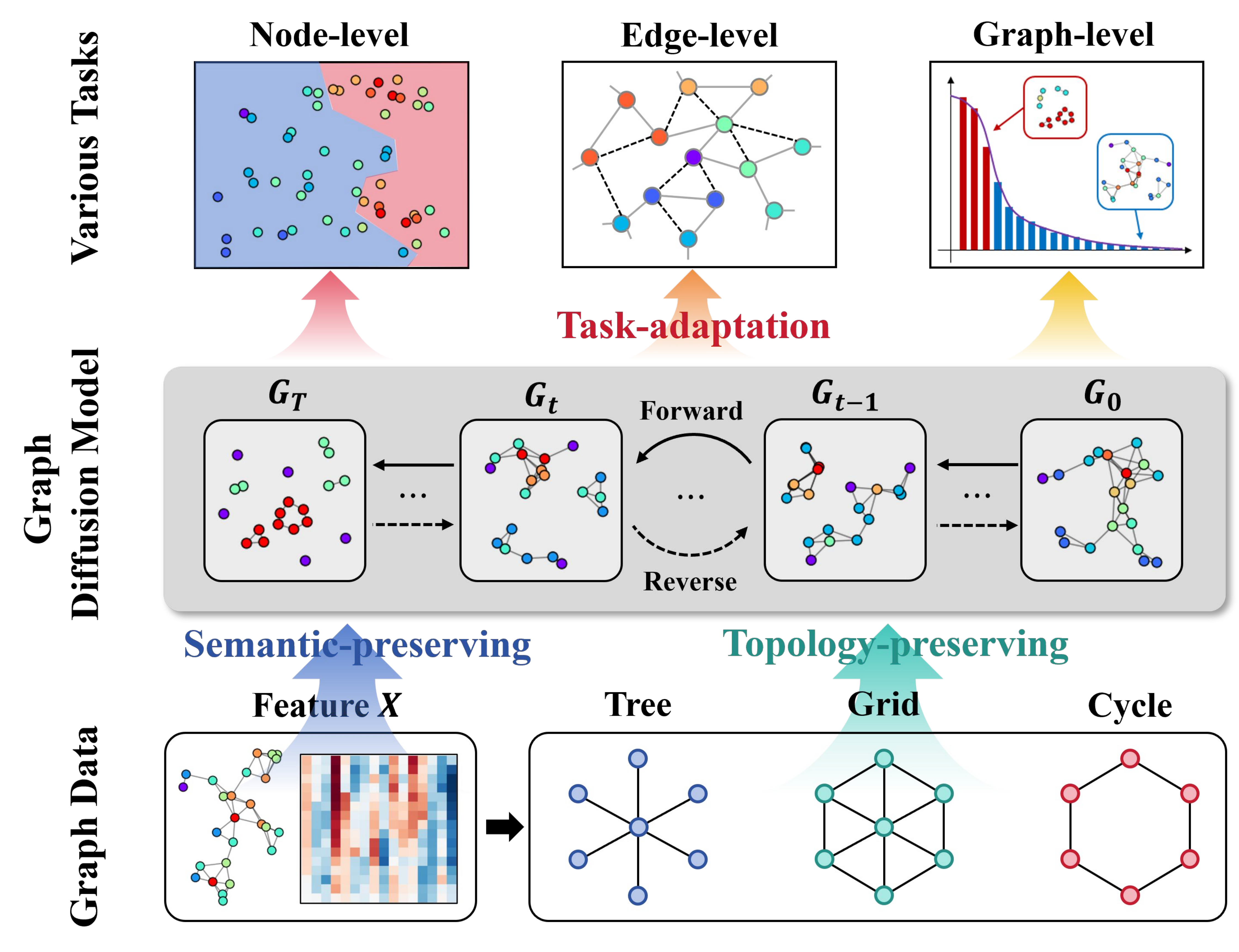} 
}
\vspace{0.5em}
\subfigure[Complex Multi-level Data Manifolds]{\label{fig:toplogy}
\includegraphics[width=0.48\textwidth]{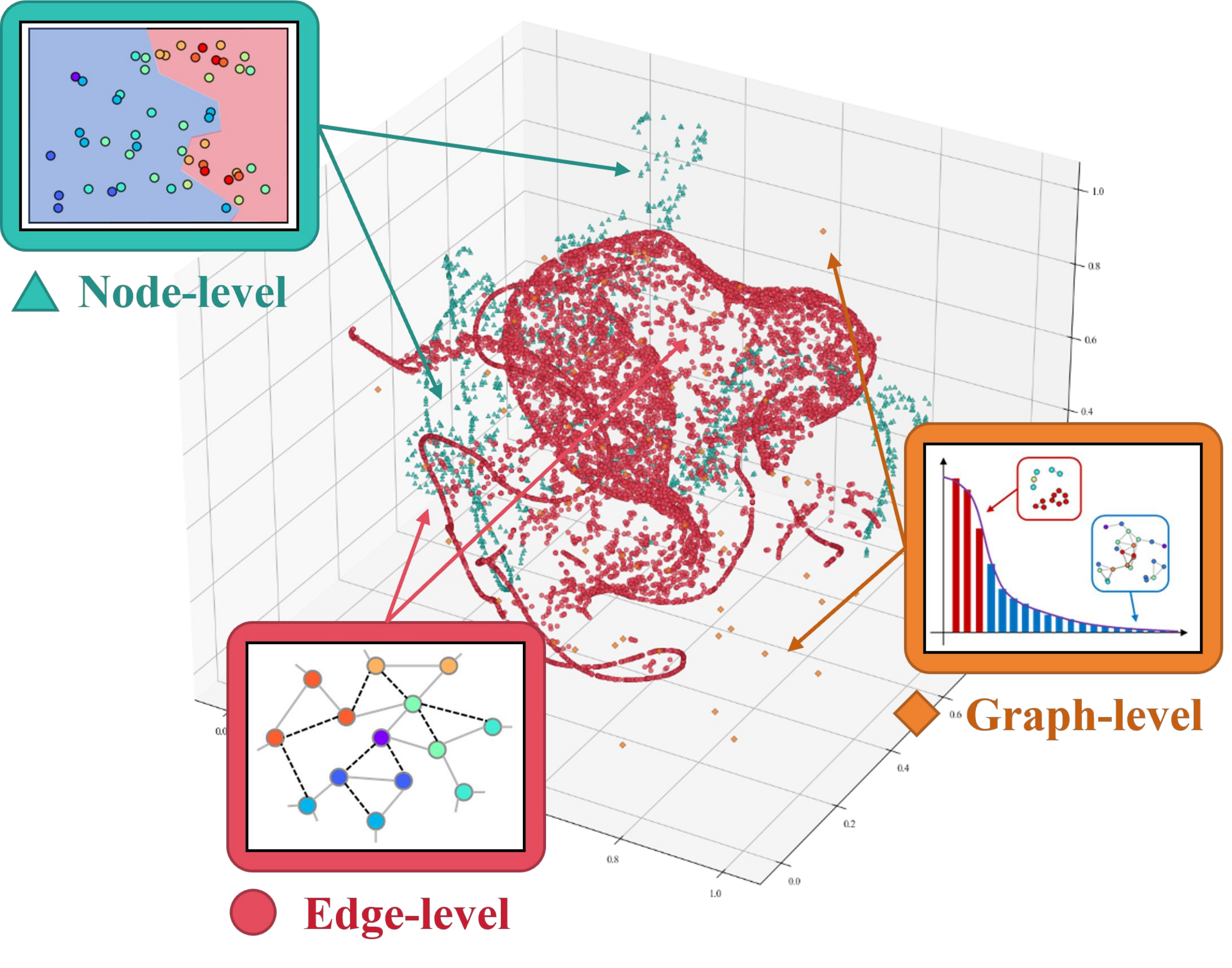} 
}
\vspace{-1em}
\caption{(a) An overview of latent graph diffusion model for prediction and generation. (b) Feature Entanglements: we visualize the multi-level latent representations learned in graph regression tasks on the ZINC12K~\cite{zinc} dataset using t-SNE. The results reveal that representations with distinct geometric properties become entangled in a shared Euclidean latent space. }
\label{fig:observation}
\vspace{-1em}
\end{figure*}
Graph-structured data is widely used in real-world applications~\cite{topten} such as recommendation systems~\cite{recommendatinsystem}, social networks~\cite{socialnetwork}, and molecular modeling~\cite{minkai}. Due to the non-Euclidean structure of graphs, there are some studies~\cite{geometricdeeplearning} that have leveraged differential geometry to explore non-Euclidean geometric spaces that better align with the intrinsic structure of graphs. In a geometric perspective, non-Euclidean manifolds can enable a deeper understanding of graph-structured data, which in turn facilitates improved performance in downstream tasks. For example, hyperbolic spaces are well-suited for modeling small-world and hierarchical graphs~\cite{HGCN,hyperneural}, whereas spherical spaces are more effective at capturing the structure of densely connected graphs~\cite{sphere}.
Product manifold~\cite{productmanifold,graphinference} is introduced to model more complex graph data by constructing a space of mixed curvatures~\cite{mixed}.

Moreover, diffusion models~\cite{DDPM} have demonstrated strong capabilities in capturing complex data distributions and understanding the inherent geometric properties of the data~\cite{diffusionmanifold}. Recent methods~\cite{LGD} have introduced the latent graph diffusion framework for unifying the graph generation and prediction tasks. Specifically, it embeds multi-level graph features (including node-level, edge-level, and graph-level) into a unified low-dimensional latent space. 
Then the prediction task is reformulated as a conditional generation problem, where graph features explicitly serve as the condition guiding the generation of target graph attributes/labels.

However, although this latent diffusion framework is theoretically sound in general data, it overlooks the rich and diverse geometric structures inherently present in graph data and lacks a unified geometric perspective for graph learning tasks. As observed in Fig.~\ref{fig:observation}, representations across different levels are often entangled within a shared latent space, despite exhibiting distinct intrinsic geometric properties.  These features exhibit curvature heterogeneity across different levels and should be better modeled in spaces with varying curvature.  This highlights the limitations of the existing method and calls for a new modeling paradigm capable of capturing the optimal manifold signatures~\cite{manifoldsignature} (such as the choice of curvature, manifold components, and dimensionality) for complex graph data.

Based on the above insights, an ideal Riemannian diffusion framework should first reconstruct the underlying data manifold using a geometry-aware autoencoder, and then model the distribution over this manifold through a diffusion process. However, the inherent geometric complexity of multi-level graph features, further compounded by the demands of multi-task learning, makes modeling the underlying manifold highly challenging and well beyond the capacity of simple designs. Specifically, it faces the following two key challenges:

\textit{How to assign an appropriate manifold signature during the autoencoding process?} A common approach is to map features on product manifolds~\cite{productmanifold,diffdock} with learnable curvatures~\cite{motif} using exponential mapping.  However, due to curvature heterogeneity across different feature levels, this method may lead to numerical instability in the exponential map, making model optimization more difficult and restricting its effectiveness on a range of downstream tasks.

\textit{How to generate an accurate manifold distribution during the diffusion process?} In the generation stage, diffusion models often deviate from the original data manifold~\cite{diffusioninverse}, leading to sub-optimal performance. One approach~\cite{cfg++} to address this issue is by incorporating manifold constraints into the condition generation process, which helps guide the model back to the desired manifold. This can be seen as a form of precise conditional control. However, in unconditional graph generation tasks, the absence of such guiding information can result in deviations from the intended manifold structure.

To address these challenges, we propose \textbf{GeoMancer}: a novel Riemannian Diffusion Framework for graph generation and prediction. To better choose the manifold signatures, we construct a product manifold as the latent space for each level feature. Then, we decouple the multi-level features entangled within it onto their corresponding task-specific manifolds.
Additionally, to mitigate numerical instability caused by exponential and logarithmic mappings, we employ a Riemannian kernel method based on generalized Fourier transforms. This approach preserves the isometric geometric properties of Riemannian spaces while remaining compatible with well-established Euclidean models.
To guide the diffusion model to generate features on a more desirable manifold, we leverage the rich geometric information in the latent space to produce pseudo-labels for unconditional graph generation. This allows us to reformulate all graph-related tasks as conditional generation problems. During the generation process, we further enhance this guidance by introducing a manifold-constrained sampling strategy. Our contributions are summarized as follows:
\begin{itemize}[leftmargin=*,itemsep=0pt]
    \item We propose a unified Riemannian diffusion framework that provides a geometric understanding of graph generation and prediction tasks by assigning geometric spaces aligned with the intrinsic property of multi-level graph data.
    \item We introduce key improvements to both the encoding and generation stages of the Riemannian latent diffusion framework, including multi-level product manifold modeling, a numerically stable Riemannian kernel method, and a manifold-constrained conditional generation strategy, enabling more accurate and robust learning of the underlying geometry in graph data.
    \item Extensive experiments demonstrate that our model achieves excellent performance across multiple levels of tasks in generation, classification, and regression.
\end{itemize}

\section{Related Work}

\textbf{Graph diffusion model for generation.} Graph diffusion models can be broadly categorized into two approaches: discrete diffusion and latent diffusion. 
In discrete diffusion, GDSS~\cite{GDSS} employs stochastic differential equation (SDE)-based diffusion techniques to model both node features and adjacency matrices.  GSDM~\cite{SpecGDSS} extends this framework by incorporating diffusion in the spectral domain, further enhancing its capability to capture graph structures.  DiGress~\cite{Digress} adapts the diffusion process specifically for discrete data, while GruM~\cite{Grum} introduces a Schrödinger bridge to preserve the effective structural properties of graphs during generation. Defog~\cite{defog} proposes a discrete flow matching method for generating discrete graph-structured data.
In latent diffusion, Graphusion~\cite{Graphusion} utilizes variational autoencoders to map graph structures into a latent representation space, assigning soft labels through structural clustering to capture spatial relationships.  HypDiff~\cite{Hypdiff} projects graph structures into hyperbolic space and applies geometrically constrained diffusion to maintain the anisotropic properties of graphs, ensuring that the generated structures align with their intrinsic geometric characteristics.

\textbf{Graph diffusion model for representation.}
Compared to their widespread use in generative tasks, graph diffusion models have been relatively under-explored in the context of representation learning. Among the few existing approaches, DDM~\cite{directional} focuses on denoising node features and leverages a Unet-based architecture to extract effective representations. Meanwhile, LGD~\cite{LGD} represents a significant advancement as the first model to unify generation and diffusion within a single framework. It achieves representation learning by reformulating downstream tasks as conditional diffusion processes, thereby bridging the gap between generative and discriminative objectives.

\textbf{Riemannian representation model.} 
Representation learning in Riemannian space primarily relies on exponential and logarithmic mappings. For example, HGCN~\cite{HGCN} uses exponential mapping to project representations generated by GCN into hyperbolic space, performing operations like aggregation and activation in Euclidean space after logarithmic mapping.~\cite{mixed} extends this to Riemannian spaces with multiple curvatures, enabling more flexible representation learning. HyLA~\cite{HyLa} introduces a hyperbolic framework that replaces exponential mappings with isometric invariant kernel mappings, preserving hyperbolic geometric properties. Building on this, MotifRGC~\cite{motif} generalizes the approach to arbitrary Riemannian spaces and uses contrastive learning to assign node-specific curvatures, significantly improving representation expressiveness.  The Riemannian MOE architecture~\cite{graphmore,galaxy} has also been introduced to capture different graph structure features recently.

\section{Method}

\begin{figure*}\label{fig:architecture}
    \centering
    \includegraphics[width=\linewidth]{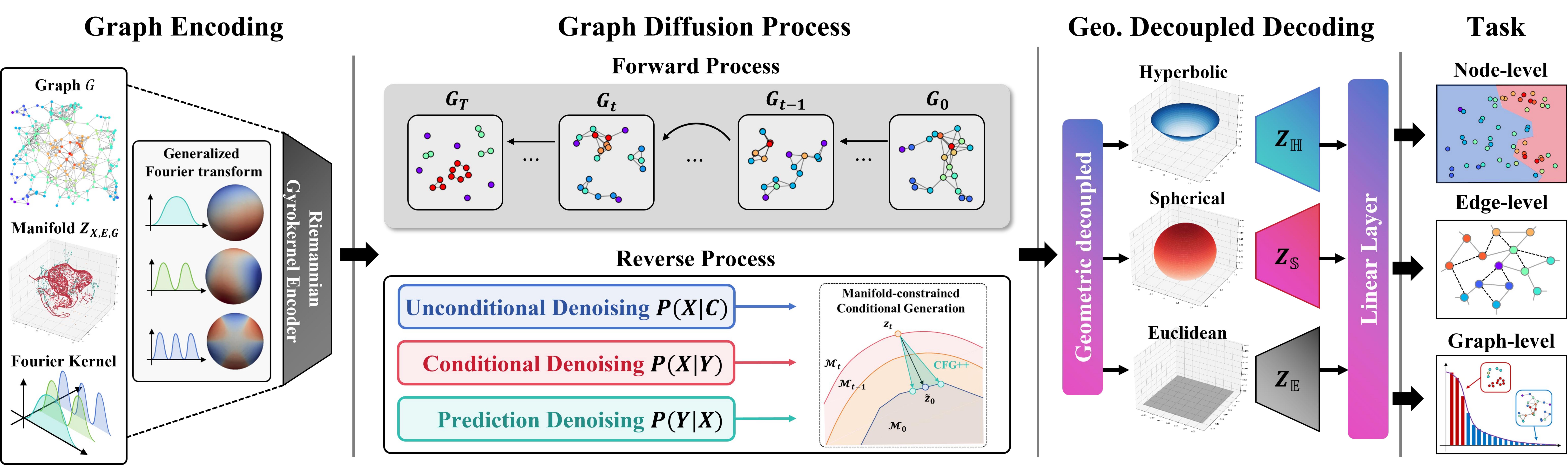}
    \caption{An Illustration of \modelname~ Architecture. }
    \label{fig:framework}
\end{figure*}

In this section, we propose our model \modelname, a Riemannian diffusion model for graph generation and prediction. In Section~\ref{notation}, we first introduce how to use a graph diffusion model as a unified framework for both generation and prediction tasks. In Section~\ref{rieman}, we propose the Riemannian graph autoencoder. We first introduce a Riemannian kernel based on the generalized Fourier transform to replace the exponential map. Then, we describe how the encoder captures the complex geometry of the latent space, and how the decoder projects data onto the task-specific manifold.
In Section~\ref{diffusion}, we introduce a self-guided strategy for unconditional graph generation, allowing all graph tasks to be unified under a conditional generation framework. We further incorporate a manifold constraint guidance method to ensure the generation aligns with clean data manifolds. The preliminaries for the methods are presented in the Appendix B.

\subsection{Notation and Problem Definition}
\label{notation}
A graph with $N$ nodes is defined as $G=(X, E, Y_X,Y_G)$, where $X=[x_1,x_2,\ldots,x_N]\in\mathbb{R}^{N\times d_{n}}$ denotes the node features and $x_i$ denotes the feature of node $i$. $E\in\mathbb{R}^{N\times N \times d_{e}}$ denotes the feature of edges and $e_{ij}$ is the edge feature between node $i$ and node $j$. $Y_X\in\mathbb{R}^{N\times 1}$  denotes labels or properties of the nodes, $Y_G\in\mathbb{R}$ denotes the label or property of the graph $G$. 

% For the uncondtional graph generation task, the objective is to generate a synthetic graph $\mathcal{\tilde{G} }=(\boldsymbol{\tilde{X}}, \boldsymbol{\tilde{A}})$, whose structural distribution is similar to that of $G$. For the conditional graph generation task, its goal is to generate the synthetic graph $\mathcal{\tilde{G} }=(\boldsymbol{\tilde{X}}, \boldsymbol{\tilde{A}})$ under the guidance of $\mathcal{Y}_G$, whose content reflects the properties or characteristics specified by  $\mathcal{Y}_G$. For the downstream understanding task, it can be treated as the special conditional generation task. Specifically, the condition is $\boldsymbol{\tilde{X}}$, and the generation is $Y$.
Our model is a generative framework that unifies general generation and multi-level prediction tasks under a diffusion paradigm. Specifically, it can be conceptualized as follows:
(1) For the unconditional graph generation task, the objective is to learn a graph generative model that can capture the distribution $p(G)$ of the target graph set $ \left\{ G_1,G_2,\ldots,G_N\right\}$.
(2) For the conditional graph generation task, it seeks to generate synthetic graphs conditioned on specific label or property $Y_G$.
The goal is to model the conditional distribution $p(G |Y_G)$.
(3) For the downstream prediction task, it can be viewed as a special form of the conditional generation task where the generation target is label $Y$ and the condition is $(X,E)$. Thus, the generation model is to learn the conditional distribution of $p(Y|X,E)$.

\subsection{Riemannian GyroKernel Autoencoder}
\label{rieman}
The goal of Riemannian autoencoder is to encode complex multi-level graph features into a unified low-dimensional latent space that preserves the geometric heterogeneity and decode them to the task-specific manifold. However, mapping data onto a product manifold often relies on exponential and logarithmic maps, which are prone to severe numerical instability, making the optimization difficult.

To address this, we use a Riemannian kernel~\cite{motif} as a substitute, which maps Riemannian prior representations to Euclidean space while preserving isometry for encoding Euclidean features. Guided by Bochner’s Theorem (See the Appendix A), isometry-invariant kernels can be constructed through Fourier mappings based on eigenfunctions of Laplace operators. In this work, we utilize a generalized Fourier mapping $\phi_{\mathrm{gF}}(x)$  defined in the gyrovector ball $\mathbb{G}_{\kappa}^{n}$ of Riemannian manifold with the learnable curvature $\kappa$ . Specifically, the eigenfunction $\mathrm{gF}_{\boldsymbol{\omega},b,\lambda}^{\kappa}(\boldsymbol{x})$ in the gyrovector ball $\mathbb{G}_{\kappa}^{n}$ can be formulated as:

\begin{equation}\mathrm{gF}_{\boldsymbol{\omega},b,\lambda}^{\kappa}(\boldsymbol{x})=A_{\boldsymbol{\omega},\boldsymbol{x}}\cos\left(\lambda\langle\boldsymbol{\omega},\boldsymbol{x}\rangle_{\kappa}+b\right),\boldsymbol{x}\in\mathbb{G}_{\kappa}^{n},\end{equation}
where $A_{\boldsymbol{\omega},\boldsymbol{x}} = \exp\left(\frac{n-1}{2}\langle\omega,x\rangle_\kappa\right)$. $\langle\boldsymbol{\omega},\boldsymbol{x}\rangle_\kappa=\log\frac{1+\kappa\|\boldsymbol{x}\|^2}{\|\boldsymbol{x}-\boldsymbol{\omega}\|^2}$ is the signed distance in the gyrovector ball. $\boldsymbol{\omega}$ denotes the phase vector, uniformly sampling from a a n-dimensional unit ball. $b$ denotes the bias, uniformly sampling from $[0,2\pi]$.

Then the generalized Fourier mapping $\phi_{\mathrm{gF}}(x)$ can be denoted as:
\begin{equation}
\label{eq:founier}
\phi_{\mathrm{gF}}(x)=\frac{1}{\sqrt{m}}\left[\mathrm{gF}_{\boldsymbol{\omega}_{1},\lambda_{1},b_{1}}^{\kappa}(x),\cdots,\mathrm{gF}_{\boldsymbol{\omega}_{m},\lambda_{m},b_{m}}^{\kappa}(x)\right]\in\mathbb{R}^{m}.\end{equation}

We initialize a product manifold Riemannian representation vectors $V_{\kappa} = \{V_{\kappa_1}, V_{\kappa_2}, \ldots, V_{\kappa_m}\}$ with different curvatures $\kappa_i$. Using equation~\eqref{eq:founier}, we compute the isometric-invariant Euclidean features $\overline{V}_{\kappa}$ after generalized Fourier mapping:
\begin{equation}
\overline{V}_{\kappa} = \phi_{\mathrm{gF}}(V_{\kappa})=\{ \phi_{\mathrm{gF}}(V_{\kappa_1}), \phi_{\mathrm{gF}}(V_{\kappa_2}), \ldots, \phi_{\mathrm{gF}}(V_{\kappa_m})\}.
\end{equation}
Then, we can aggregate each dimension of any graph representation $Z$ by taking advantage of the geometric properties of the product manifold:
$\overline{Z} = Z\overline{V}_{\kappa}.$
Unlike simple feature mapping, this approach enables each dimension of the features to capture distinct geometric information, thereby more effectively revealing the intrinsic geometric properties of graph features. We have provided a more detailed introduction to this method in Appendix B.

\textbf{Encoder}. 
Here, we need to build a powerful graph encoder that can embed the node features $X$ and edge features $E$ of the graph $G$ into a unified low-dimension latent space $Z = \left\{Z_{X},Z_{E}\right\}\in \mathbb{R}^{(N+N\times N) \times d}$. The graph-level feature $Z_G$ can be obtained by aggregating $Z_{X}$ and $Z_{E}$.
To better represent edge features $E$, we adopt a flexible graph transformer~\cite{grit} as the backbone network for edge enhancement. It constructs relevant attention mechanisms for both nodes $Z_{x_i}$ and edges $\boldsymbol{z}_{e_{ij}}$ to efficiently underlying the relationships between them. Specifically, the $l$-th graph transformer layer can be represented as:
\begin{equation}\begin{aligned}
\boldsymbol{z}_{e_{ij}}^{l+1} & =\sigma\left(\rho\left((\boldsymbol{Q}\boldsymbol{z}_{x_i}^{l},\boldsymbol{K}\boldsymbol{z}_{x_j}^{l})\odot\boldsymbol{E}_{w}\boldsymbol{z}_{e_{ij}}^{l}\right)+\boldsymbol{E}_{b}\boldsymbol{z}_{e_{ij}}^{l}\right), \\
\alpha_{ij} & =\mathrm{Softmax}_{j\in\mathcal{V}}(\boldsymbol{W}\boldsymbol{z}_{e_{ij}}^{l+1}), \\
\boldsymbol{z}_{x_{i}}^{l+1} & =\sum_{j\in\mathcal{V}}\alpha_{ij}(\boldsymbol{V}\boldsymbol{z}_{x_j}^{l}+\boldsymbol{E}_{v}\boldsymbol{z}_{e_{ij}}^{l+1}),
\end{aligned}
\end{equation}
where $\boldsymbol{Q},\boldsymbol{K},\boldsymbol{V},\boldsymbol{W},\boldsymbol{E}_w,\boldsymbol{E}_b,\boldsymbol{E}_v$ are learnable weight matrices; $\odot$ denotes the elementwise multiplication; $\sigma$ is a nonlinear activation and $\rho(x)=(\operatorname{ReLU}(\mathbf{x}))^{1 / 2}-(\operatorname{ReLU}(-\mathbf{x}))^{1 / 2}$ is a function used for training stability. 

Then, the geometric latent representation $\overline{Z}$ is obtained by endowing the embeddings with product manifold geometric properties with a given $\overline{V}_{\kappa}$. To simplify notation, we denote the geometric latent representation $\overline{Z}$ as $Z$ in the remainder.

\textbf{Decoder.} 
The decoder aims to project the latent representations $Z$ onto task-specific manifolds, enabling effective adaptation to downstream tasks.  To achieve this, we design a dedicated decoupling modle for each level feature. The core idea of this method is to split a complex product manifold into multiple simple manifolds: $\mathcal{M}\to \mathcal{M}_1\times\cdots\times\mathcal{M}_m$ and selects the most appropriate geometric representation based on the specific requirements of each task.

\begin{proposition}
\label{pro1}
Let $f_{i}:\mathcal{M}_{i}\rightarrow\mathbb{R}, i\in\{1,2,\ldots,L\}$ be twice-differentiable functions such that $\Delta_{\mathcal{M}_{i}}f_{i}=\lambda_i f_{i}$, where $\Delta_{\mathcal{M}_{i}}$ is Laplace operators and $\lambda_i$ is the eigenvalue.Take $\pi_{i}:\mathcal{M}\rightarrow\mathcal{M}_{i}$ to be
the projection of $M$ onto $M_i$, We can then
define the natural extension of $f_i$ to $M$ via $g_{i}=f_{i}\circ\pi_{i}$. It follows that 
\begin{equation}
\Delta_{\mathcal{M}}(\prod_{i=1}^{L} g_i)=(\sum_{i=1}^{L} \lambda_{i})\prod_{i=1}^{L}g_i.    
\end{equation}
\end{proposition}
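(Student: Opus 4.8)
The plan is to reduce everything to two structural facts about product Riemannian manifolds and then run a short induction on $L$ using the Leibniz rule for the Laplacian.

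First I would recall that on $\mathcal{M}=\mathcal{M}_1\times\cdots\times\mathcal{M}_L$ the Riemannian metric is the direct sum of the factor metrics, so at every point the tangent space splits orthogonally as $T\mathcal{M}=\bigoplus_i T\mathcal{M}_i$ and, in product coordinates $(x^{(1)},\dots,x^{(L)})$, the metric matrix is block diagonal, $g^{\mathcal{M}}=\mathrm{diag}(g^{(1)},\dots,g^{(L)})$ with $\det g^{\mathcal{M}}=\prod_i\det g^{(i)}$. From this I would extract two consequences. (i) The Laplace--Beltrami operator decomposes as $\Delta_{\mathcal{M}}=\sum_{i=1}^{L}\Delta_{\mathcal{M}_i}$, where $\Delta_{\mathcal{M}_i}$ differentiates only in the coordinates of the $i$-th factor: this follows from the coordinate formula $\Delta_{\mathcal{M}}h=\tfrac{1}{\sqrt{\det g^{\mathcal{M}}}}\,\partial_a\!\big(\sqrt{\det g^{\mathcal{M}}}\,(g^{\mathcal{M}})^{ab}\,\partial_b h\big)$, because $(g^{\mathcal{M}})^{ab}=0$ when $a,b$ index different blocks and $\partial_{x^{(j)}}\det g^{(i)}=0$ for $i\ne j$, so the cross terms vanish. (ii) If $h$ depends only on the coordinates of $\mathcal{M}_i$ --- e.g. $h=g_i=f_i\circ\pi_i$ with $\pi_i$ a coordinate projection --- then $\nabla_{\mathcal{M}}h$ lies entirely in the subbundle $T\mathcal{M}_i$ and $\Delta_{\mathcal{M}_j}h=0$ for $j\ne i$. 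Combining (i), (ii), and the pulled-back eigenrelation $\Delta_{\mathcal{M}_i}g_i=(\Delta_{\mathcal{M}_i}f_i)\circ\pi_i=\lambda_i g_i$, I get $\Delta_{\mathcal{M}}g_i=\lambda_i g_i$ and, crucially, $\langle\nabla_{\mathcal{M}}g_i,\nabla_{\mathcal{M}}g_j\rangle=0$ for $i\ne j$, since the two gradients live in complementary orthogonal blocks.

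Next I would induct on $L$ with the Leibniz identity $\Delta_{\mathcal{M}}(uv)=(\Delta_{\mathcal{M}}u)v+u(\Delta_{\mathcal{M}}v)+2\langle\nabla_{\mathcal{M}}u,\nabla_{\mathcal{M}}v\rangle$. The case $L=1$ is the hypothesis. For the step, set $P_{L-1}=\prod_{i=1}^{L-1}g_i$; the induction hypothesis gives $\Delta_{\mathcal{M}}P_{L-1}=\big(\sum_{i=1}^{L-1}\lambda_i\big)P_{L-1}$, and since $P_{L-1}$ depends only on the coordinates of $\mathcal{M}_1\times\cdots\times\mathcal{M}_{L-1}$, its gradient has no $T\mathcal{M}_L$ component, so $\langle\nabla_{\mathcal{M}}P_{L-1},\nabla_{\mathcal{M}}g_L\rangle=0$. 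Applying the Leibniz identity with $u=P_{L-1}$, $v=g_L$ yields $\Delta_{\mathcal{M}}(P_{L-1}g_L)=\big(\sum_{i=1}^{L-1}\lambda_i\big)P_{L-1}g_L+\lambda_L P_{L-1}g_L$, which is $\big(\sum_{i=1}^{L}\lambda_i\big)\prod_{i=1}^{L}g_i$, closing the induction.

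The main obstacle --- really the only nontrivial point --- is establishing (i) and (ii) rigorously, i.e. the separation $\Delta_{\mathcal{M}}=\sum_i\Delta_{\mathcal{M}_i}$ together with the gradient-orthogonality across factors; once the block-diagonal structure of $g^{\mathcal{M}}$ and the product form of $\det g^{\mathcal{M}}$ are in hand, these are short coordinate computations. Everything afterward --- the chain rule $\Delta_{\mathcal{M}_i}(f_i\circ\pi_i)=(\Delta_{\mathcal{M}_i}f_i)\circ\pi_i$ and the vanishing of the Leibniz cross terms in the induction --- is routine bookkeeping, so I would keep the write-up focused on the metric decomposition and state the rest tersely.
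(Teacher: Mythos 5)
Your proof is correct and follows the same overall strategy as the paper's: induction on $L$ via the Leibniz product rule for the Laplace--Beltrami operator. The substantive difference is that you explicitly supply the two facts that actually make the induction close and which the paper's write-up silently assumes: (i) that $\Delta_{\mathcal{M}}=\sum_{i}\Delta_{\mathcal{M}_i}$ on the product, which in particular upgrades the hypothesized $\Delta_{\mathcal{M}_i}f_i=\lambda_i f_i$ to the needed $\Delta_{\mathcal{M}}g_i=\lambda_i g_i$; and (ii) that $\langle\nabla_{\mathcal{M}}\prod_{i\le N} g_i,\,\nabla_{\mathcal{M}}g_{N+1}\rangle_g=0$, because the two gradients live in complementary orthogonal blocks of the product metric. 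In the paper's inductive step the cross term is written down after applying the product rule and then simply dropped in the following line with no comment, and the eigenrelation is applied for $\Delta_{\mathcal{M}}$ even though it was only postulated for each factor operator $\Delta_{\mathcal{M}_i}$; your derivation of the block-diagonal metric, the orthogonal tangent splitting, and the vanishing cross inner product is therefore not an alternative route but the missing justification for those two steps. One cosmetic remark: the paper defines $\Delta_g=-\mathrm{div}_g\circ\nabla_g$ and accordingly carries a $-2\langle\nabla\varphi,\nabla\psi\rangle_g$ term in its Lemma A.1, whereas your $+2\langle\nabla u,\nabla v\rangle$ corresponds to the opposite sign convention for the Laplacian; this flips the sign of each $\lambda_i$ but does not affect the conclusion, and the cross term vanishes in either convention, so the discrepancy is harmless.
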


According to the proposition~\ref{pro1},a Euclidean representation endowed with the geometric prior of a complex product manifold can be decoupled into simpler representations over its constituent manifolds.

Then, we decompose $Z=[Z_{\kappa_1},Z_{\kappa_2},\ldots,Z_{\kappa_m}]$ into the representation of each component $Z_{\kappa_i}$. Since they are still in Euclidean space after the generalized Fourier mapping, we directly use the attention or linear layers at the end to capture the most effective manifold representation for the task.

\textbf{Training Objective}. The training objective encompasses a target loss $\mathcal{L}_{tgt}$ and a regularization constraints $\mathcal{L}_{reg}$:
\begin{equation}\mathcal{L}=\mathcal{L}_{tgt}+\mathcal{L}_{reg}.
\end{equation}
In the generation task, the target loss $\mathcal{L}_{tgt}$ is composed of the reconstruction cross-entropy loss of nodes and edges. For regression or classification tasks, the target loss $\mathcal{L}_{tgt}$ is the MSE of a regression task or the cross-entropy of a classification task.
The regularization loss $\mathcal{L}_{reg}$ represents the regularization constraints, which pushes the representation $Z$ towards a standard normal distribution, thus preventing high variance in the latent space. Specifically, it can be written as: $L_{reg} = D_{\text{KL}}\left(q(Z\mid (X,E)) \parallel N(0, I)\right)$.

\subsection{Manifold-Constrained Diffusion}
\label{diffusion}
After obtaining a geometric latent representation $Z=\{Z_X,Z_E,Z_G\}$, we conduct diffusion training and generation within this unified latent space. 
In addition to incorporating geometric priors, a notable advantage of the Riemannian kernel method is that the representations remain in Euclidean space. This allows for the direct application of classical diffusion techniques without the design for more complex Riemannian diffusion.
The forward process~\cite{DDPM} of the diffusion model gradually turns the data toward noise. Specifically, this process can be defined as:

\begin{equation}q(Z_t|Z_{t-1})=N(X_t;\sqrt{\bar{\alpha}_{t-1}}Z_{t-1},\sqrt{1-\bar{\alpha}_{t-1}}I)\end{equation}

where $N{(\cdot)}$ is the Gaussian distribution and $\bar{\alpha}_{t-1}$ is calculated by noise schedule.

To better capture the complex data manifolds, we hope to adopt a manifold-constrained conditional generation approach~\cite{cfg++}. While tasks like conditional generation and prediction can be naturally redefined as conditional generation problems, unconditional graph generation lacks explicit label guidance. To bridge this gap, we introduce a self-guided mechanism that leverages the rich geometric information embedded in the latent space to generate pseudo-labels, thereby providing effective guidance to the model during generation.

\textbf{Self-Guidance}. 
Even in unconditional generation, structural variations inherently reflect differences on the underlying geometric manifold. Therefore, a natural approach is to leverage the rich geometric features encoded in the latent space to guide the graph generation process. By applying k-means clustering to the latent graph-level representation
$Z_G$, we assign a pseudo-label $C$ to each graph. Consequently, the unconditional generation of graphs can be reformulated as a new conditional generation task to learn $P(G|C)$. In the sampling stage, $C$ is selected randomly for each graph.

Further, we unify graph generation and prediction task by modeling them within a general conditional generation framework to learn the distribution $P(x|y)$. For unconditional generation, we generate pseudo labels 
$C$ that capture complex geometric property through self-guidance. In conditional graph generation, explicit graph properties serve as conditions. Predictive tasks, such as classification and regression, are reformulated as conditional generation processes based on known representations.

We then adopt CFG++~\cite{cfg++}, a manifold-constrained classifier-free guidance method that formulates conditional generation as an inverse problem, enabling the model to better capture the clean manifold of the data. Specifically, the reverse generation process can be formulated as:

\begin{equation}
\begin{aligned}
\tilde{Z_0}&=(Z_{t}-\sqrt{1-\bar{\alpha}_{t}}\tilde{\epsilon}_\theta(Z_t,\tau(y)))/\sqrt{\bar{\alpha}_{t}}
\\
Z_{t-1}&=\sqrt{\bar{\alpha}_{t-1}}\tilde{Z_0}+\sqrt{1-\bar{\alpha}_{t-1}}\boldsymbol{\epsilon}_\theta(Z_t)
\end{aligned}
\end{equation}

where $y$ is the condition and $\tau(y)$ is the latent embedding of $y$. $\tilde{\epsilon}_\theta$ is derived from the outputs of the model ${\epsilon}_\theta$ under both conditional and unconditional settings. Specifically, it can be written as:

\begin{equation}
    \tilde{\epsilon}_\theta(\hat{Z_t},\tau(\boldsymbol{y}))=(1-\lambda)\boldsymbol{\epsilon}_\theta(\hat{Z_t},\tau(\boldsymbol{y}))-\lambda \boldsymbol{\epsilon}_\theta(\hat{Z_t})
\end{equation}
where $\lambda$ is a hyperparameter that controls the strength of condition guidance. 

The model ${\epsilon}_\theta$ is optimized via a noise prediction strategy, learning to estimate the added noise in the forward process:
\begin{equation}
\mathcal{L}_{diff}=\mathbb{E}_{Z_t,\boldsymbol{y},\epsilon_{t}\sim\mathcal{N}(\mathbf{0},\mathbf{I}),t}\left[\|\tilde{\epsilon}_{\theta}(Z_{t},t,\tau(\boldsymbol{y})-\epsilon_{t}\|_{2}^{2}\right],
\end{equation}

\textbf{Model Architechture}. Since there is no prior information about edges during noise sampling, we do not use message passing or positional encoding. Instead, we directly employ a graph transformer as the denoising network. For incorporating conditional information, we follow existing methods~\cite{LGD} and introduce the condition $\boldsymbol{y}$ through the cross-attention:
\begin{equation}\begin{gathered}
\boldsymbol{z}_{x_{i}}^{l+1}=\mathrm{softmax}(\frac{(\boldsymbol{Q}_{h}\boldsymbol{z}_{x_i}^{l})(\boldsymbol{K}_{h}\tau(\boldsymbol{y}))^{\top}}{\sqrt{d^{\prime}}})\cdot\boldsymbol{V}_{h}\tau(\boldsymbol{y}), \\
\boldsymbol{z}_{e_{ij}}^{l+1}=\mathrm{softmax}(\frac{(\boldsymbol{Q}_{e}\boldsymbol{z}_{e_{ij}}^{l})(\boldsymbol{K}_{e}\tau(\boldsymbol{y}))^{\top}}{\sqrt{d^{\prime}}})\cdot\boldsymbol{V}_{e}\tau(\boldsymbol{y}).
\end{gathered}\end{equation}

\section{Experiment}
%主实验增加nc任务的更多数据集，比如cora和citeseer。
%然后消融实验：对于拓扑引导的消融，这个可以比较使用去掉，然后使用cfg，使用cfg++的方法。一个表。
%对于黎曼模块设计，比如仅使用encoder，和仅使用decoder，以及共同使用的效果。我倾向于这个是一个柱状图。
%可视化实验保持不变，这里可能换说法

% \begin{table*}[t]
%     \centering
%     \small
%     \caption{Unconditional generation results on Moses. (\textbf{Bold}: best; \underline{Underline}: runner-up.)}
%     \label{tab:un_gen}
%     \begin{tabular}{l|ccccc}
%         \toprule
%         {Model} & {Validity (\%)}$\uparrow$ & {Uniqueness (\%)}$\uparrow$ & {FCD} $\downarrow$  & {Novelty (\%)}$\uparrow$ \\
%         \midrule
%         VAE         & 97.7 & 99.8 & 0.57  & \underline{69.5} \\
%         GraphINVENT        & 96.4 & 99.8 & 1.22 &- \\
%         ConGress           & 83.4 & 99.9 & 1.48 & 96.4 \\
%         DiGress        & 85.7 &100 &1.19 & 95.0 \\
%         LGD      & 97.4 &100 &1.42 & 95.9 \\
%         \midrule
%        \modelname &\textbf{100.00}  & 95.74 & 1 & 90.43  \\
%         \bottomrule
%     \end{tabular}
% \end{table*}
\begin{table*}[t]
    \centering
    \small
    \caption{Unconditional generation results on QM9. (\textbf{Bold}: best; \underline{Underline}: runner-up.)}
    \label{tab:un_gen}
    \begin{tabular}{l|ccccc}
        \toprule
        {Model} & {Validity (\%)}$\uparrow$ & {Uniqueness (\%)}$\uparrow$ & {FCD} $\downarrow$ & {NSPDK} $\downarrow$ & {Novelty (\%)}$\uparrow$ \\
        \midrule
        MoFlow         & 91.36 & 98.65 & 4.47 & 0.0170 & \underline{94.72} \\
        GraphAF        & 74.43 & 88.64 & 5.27 & 0.0200 & 86.59 \\
        GraphDF        & 93.88 & \textbf{98.58} & 10.93 & 0.0640 & \textbf{98.54} \\
        GDSS           & 95.72 & \underline{98.46} & 2.90 & 0.0003 & 86.27 \\
        DiGress        & 99.01 & 96.34 & 0.25 & 0.0003 & 35.46 \\
        HGGT           & 99.22 & 95.65 & 0.40 & 0.0003 & 24.01 \\
        GruM           & \underline{99.69} & 96.90 & \underline{0.11} & \textbf{0.0002} & 24.15 \\
        LGD      & 98.46 & 97.53 & 0.32 & 0.0004 & 56.35 \\
        \midrule
       \modelname &\textbf{100.00}  & 95.74 & \textbf{0.09} & \textbf{0.0002} & 90.43  \\
        \bottomrule
    \end{tabular}
\end{table*}
\begin{table*}[t]
    \centering
    \small 
    \caption{Conditioal generation results on QM9 (MAE $\downarrow$). (\textbf{Bold}: best; \underline{Underline}: runner-up.)}
    \label{tab:qm9_condition_results}
    \begin{tabular}{lrrrrrr}
        \toprule
        {Method} & {$\mu$} & {$\alpha$} & {$\epsilon_{\text{HOMO}}$} & {$\epsilon_{\text{LUMO}}$} & {$\Delta\epsilon$} & {$C_V$} \\
        \midrule
        {$\omega$}        & 0.043 & 0.10 & 39 & 36 & 64 & 0.040 \\
        {$\omega$ (LGD)} & 0.058 & 0.06 & 18 & 24 & 28 & 0.038 \\
        {$\omega$ (\modelname)} & 0.054 & 0.06 & 16 & 22 & 27 & 0.037 \\
         \midrule
        {Random}          & 1.616 & 9.01 & 645 & 1457 & 1470 & 6.857 \\
        {Natom}           & 1.053 & 3.86 & 426 & 813 & 866 & 1.971 \\
        {EDM}             & 1.111 & 2.76 & 356 & \underline{584} & 655 & 1.101 \\
        {GeoLDM}          & 1.108 & \textbf{2.37} & 340 & \textbf{522} & 587 & 1.025 \\
        {LGD }            & \underline{0.879} & 2.43 & \underline{313} & 641 & \underline{586} & \textbf{1.002} \\
        \midrule
        {\modelname}            & \textbf{0.832} & \underline{2.38 } & \textbf{304} & 628 & \textbf{581} & \textbf{1.002} \\
        \bottomrule
    \end{tabular}
\end{table*}

% \begin{table*}[t]
%     \centering
%     \normalsize
%     \setlength{\tabcolsep}{3pt}
%     \caption{Graph regression task on Zinc12k dataset. (MAE $\downarrow$) (Result: average score $\pm$ standard deviation. \textbf{Bold}: best; \underline{Underline}: runner-up.)}
%     \label{tab:regression_results}
%     \resizebox{\textwidth}{!}{
%     \begin{tabular}{l|cccccccccc|c}
%         \toprule
%         Model  & GCN & PNA & GSN & DeePLRP & OSAN & KP-GIN+ & GNN-AK+ & CIN & GPS & LGD & RDLD \\
%         \midrule
%         Test(MAE) & 0.163 $\pm$ 0.004 & 0.188 $\pm$ 0.004 & 0.115 $\pm$ 0.012 & 0.223 $\pm$ 0.008 & 0.187 $\pm$ 0.004 & 0.119 $\pm$ 0.002 & 0.080 $\pm$ 0.001 & 0.079 $\pm$ 0.006 & 0.070 $\pm$ 0.004 & \underline{0.065 $\pm$ 0.003} & \textbf{0.061 $\pm$ 0.003} \\
%         \bottomrule
%     \end{tabular}}
% \end{table*}

\begin{table*}[t]
    \centering
    \small
    \caption{Node-level classification tasks (accuracy $\uparrow$) ( \textbf{Bold}: best; \underline{Underline}: runner-up. OOM: cuda out of memory.)}
    \label{tab:classification_results}
    \begin{tabular}{l|ccccc}
        \toprule
        {Model} & {Photo} & {Physics}  & {Pubmed} & {Cora} & {Citeseer} \\
        \midrule
        GCN             & 92.70 ± 0.20 & 96.18 ± 0.07  & 88.9 ± 0.32& 81.60 ± 0.40& 71.60 ± 0.40\\
        GAT             & 93.87 ± 0.11 & 96.17 ± 0.08 & 83.28 ± 0.12& \underline{83.00 ± 0.70}& 72.10 ± 1.10 \\
        GraphSAINT      & 91.72 ± 0.13 & 96.43 ± 0.05 5& 85.64 ± 0.26& 81.82 ± 0.22& 72.30 ± 0.17 \\
        Graphormer      & 92.74 ± 0.13 & OOM &  92.64 ± 0.96& 82.62 ± 0.12& 71.60 ± 0.32\\
        GraphGPS        & 95.06 ± 0.13 & OOM &  90.28 ± 0.62 & 82.84 ± 0.13& \textbf{72.73 ± 0.23}\\
        Exphormer       & 95.35 ± 0.22 & 96.89 ± 0.09  & 91.44 ± 0.59& 82.77 ± 0.38& 71.63 ± 0.29\\
        NAGphormer      & 95.49 ± 0.11 & 97.34 ± 0.03  & 91.76 ± 0.49& 82.13 ± 1.18& 71.40 ± 0.30\\
        LGD            & \underline{96.94 ± 0.14} & \underline{98.55 ± 0.12}  & \underline{92.88 ± 0.29}& 82.81 ± 1.18& 72.40 ± 0.30\\
        \midrule
        \modelname           & \textbf{97.05 ± 0.13} & \textbf{98.78 ± 0.12}   & \textbf{93.10 ± 0.29}& \textbf{83.50 ± 0.23}& \underline{72.60 ± 0.20}\\
        \bottomrule
    \end{tabular}
\end{table*}

\begin{table*}[t]
    \centering
    \small
    \caption{Ablation study on unconditional generation. (\textbf{Bold}: best; \underline{Underline}: runner-up.)}
    \label{tab:ablationtopo}
    \begin{tabular}{l|ccccc}
        \toprule
        {Model} & {Validity (\%)}$\uparrow$ & {Uniqueness (\%)}$\uparrow$ & {FCD} $\downarrow$ & {NSPDK} $\downarrow$ & {Novelty (\%)}$\uparrow$ \\
        \midrule

        \modelname(w/o self-guidance)      & 98.99 & \textbf{97.61}& 0.12 & \underline{0.0003} & 54.06 \\
        \modelname(w/o cfg++)           &\textbf{100.00}  & 91.74 & \textbf{0.09} & 0.0010 & 76.43  \\
        \modelname(w/o Riemannian)      & \textbf{100.00} & 92.53 & 0.25 & 0.0004 & \underline{90.26} \\
        \midrule
        \modelname &\textbf{100.00} & \underline{95.74} & \textbf{0.09} & \textbf{0.0002} & \textbf{90.43} \\
        \bottomrule
    \end{tabular}
\end{table*}
\subsection{Experimental Setup}
\label{sec:expsetup}
We evaluate the effectiveness of our model\footnote{Our code is available at https://github.com/RingBDStack/GeoMancer.} by conducting experiments across multiple tasks. For graph structure generation, we assess both unconditional and conditional molecular generation. For prediction tasks, we evaluate the model's performance on node classification and graph regression.  These tasks collectively cover node-level, edge-level and graph-level tasks, providing a thorough assessment of our model's capabilities.

For all baselines, we report their results based on the results presented in their papers or the optimal parameters provided. All experiments were conducted using PyG, and the reported results are averaged over five runs. All models were trained and evaluated on an Nvidia A800 80GB GPU. More experimental details are reported in Appendix C.

\subsection{Generation Task}
We conduct molecular graph generation experiments on the QM9 dataset~\cite{QM9}, a widely used benchmark in machine learning for molecular data. QM9 contains 133,885 molecular graphs with 12 quantum chemical properties limited to 9 heavy atoms. 
\textbf{Unconditional Generation}. 
In the unconditional molecular generation task, we evaluate the model's ability to capture the distribution of molecular data and generate chemically valid and structurally diverse molecules. Specifically, validity is the fraction of valid molecules without valency correction or edge resampling. Uniqueness quantifies the proportion of unique valid molecules among the generated set. Novelty assesses the fraction of valid molecules that do not appear in the training set. To further evaluate the quality of the generated molecules, we employ two additional metrics: the Neighborhood Subgraph Pairwise Distance Kernel (NSPDK) MMD~\cite{nspdk}, which computes the Maximum Mean Discrepancy (MMD) between the generated and test molecules by considering both node and edge features, and the Fréchet ChemNet Distance (FCD)~\cite{frechnet}, which evaluates the distance between the training and generated graph sets using the activations of the penultimate layer of ChemNet, providing a measure of similarity in the feature space. For the baseline models, we selected the classical and recent state-of-the-art approaches, including MoFlow~\cite{moflow}, GraphAF~\cite{graphaf}, GraphDF~\cite{graphdf}, GDSS~\cite{GDSS}, DiGress~\cite{Digress}, HGGT~\cite{HGGT}, GruM~\cite{Grum} and LGD~\cite{LGD}.

The results have been reported in Table~\ref{tab:un_gen}. It can be observed that our model achieves significant improvements in the task of unconditional molecular generation. Specifically, we achieve state-of-the-art performance in terms of validity and the distributional similarity of molecular structures. In addition, our model shows competitive results in uniqueness and novelty. Notably, compared to LGD~\cite{LGD}, which also employs a latent graph diffusion framework, our approach achieves a substantial improvement in novelty. We will further investigate the underlying reasons for this phenomenon through detailed ablation studies.

\textbf{Conditional Generation}.
For the conditional generation task, its objective is to generate target molecules with specific chemical properties. Following the experimental setup outlined in~\cite{minkai}, we split the training set into two halves, each containing 50,000 molecules. We train a latent graph diffusion model and a separate property prediction network on each subset. During evaluation, we generate a molecule using the latent graph diffusion model conditioned on a given property and then use the property prediction network to predict the target property $y$ for the generated molecule. We calculate the mean absolute error (MAE) between the predicted property and the true value, conducting experiments across six properties: Dipole moment $\mu$, polarizability $\alpha$, orbital energies $\epsilon_{\text{HOMO}}$, $\epsilon_{\text{LUMO}}$, their gap $\Delta\epsilon$ and heat capacity $C_V$. 
Following~\cite{LGD}, we establish EDM~\cite{EDM}, GeoLDM~\cite{minkai} and LGD~\cite{LGD} as baseline models. Additionally, we include the following reference points for comparison: (a) the MAE of the regression model $\omega$ of ours which serve as a lower bound of the generative models; (b) Random, which shuffle the labels and evaluate $\omega$, representing an upper bound for the MAE metric; (c) Natoms, which predicts the properties based only on the number of atoms in the molecule. The results are in Table~\ref{tab:qm9_condition_results}.

First, $\omega$ of our model achieves lower MAE compared to other models, demonstrating that the Riemannian autoencoder excels in regression capability. Additionally, our approach delivers outstanding performance across multiple properties in the generation task. Notably, even without incorporating 3D information, we successfully achieve high-quality conditional generation.

\begin{figure*}[t] % 'htbp' options for positioning
    \centering
    \includegraphics[width=1\textwidth]{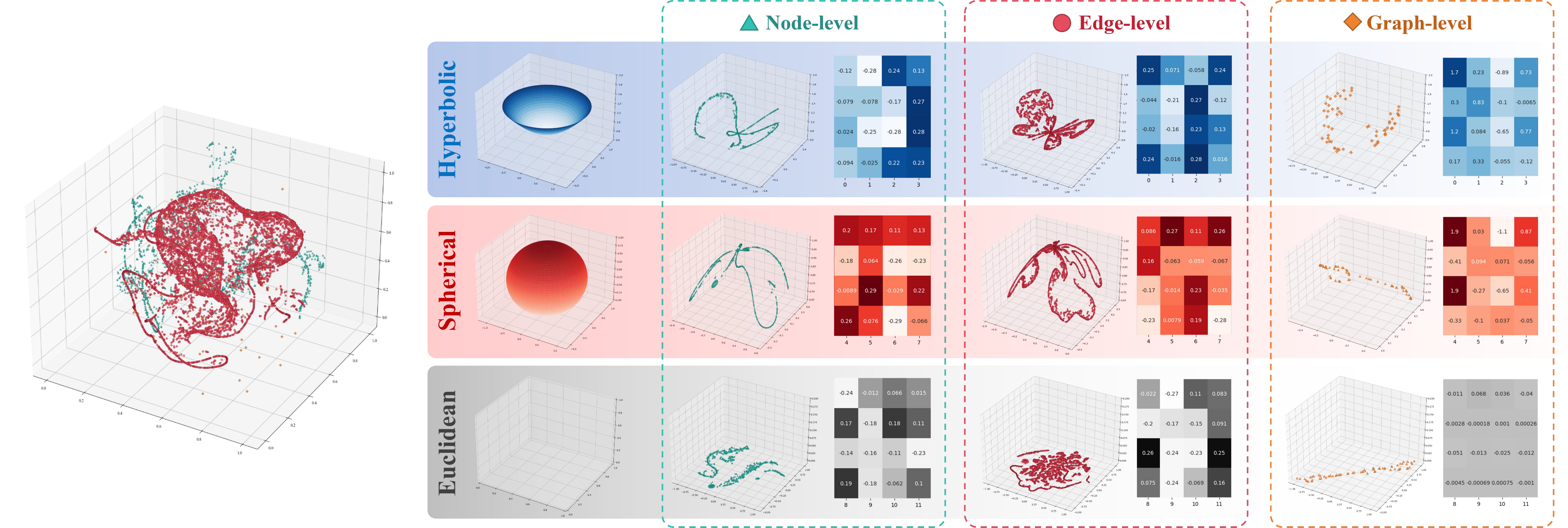}
    \vspace{-1em}
    \caption{Visualization of the embeddings with different downstream task in the sub-spaces of the geometric decoupled decoders.}
    \label{fig:viz}
    \vspace{-1em}
\end{figure*}

\subsection{Prediction Task}

\begin{wrapfigure}{r}{0.5\textwidth}  
    \centering
    \includegraphics[width=\linewidth]{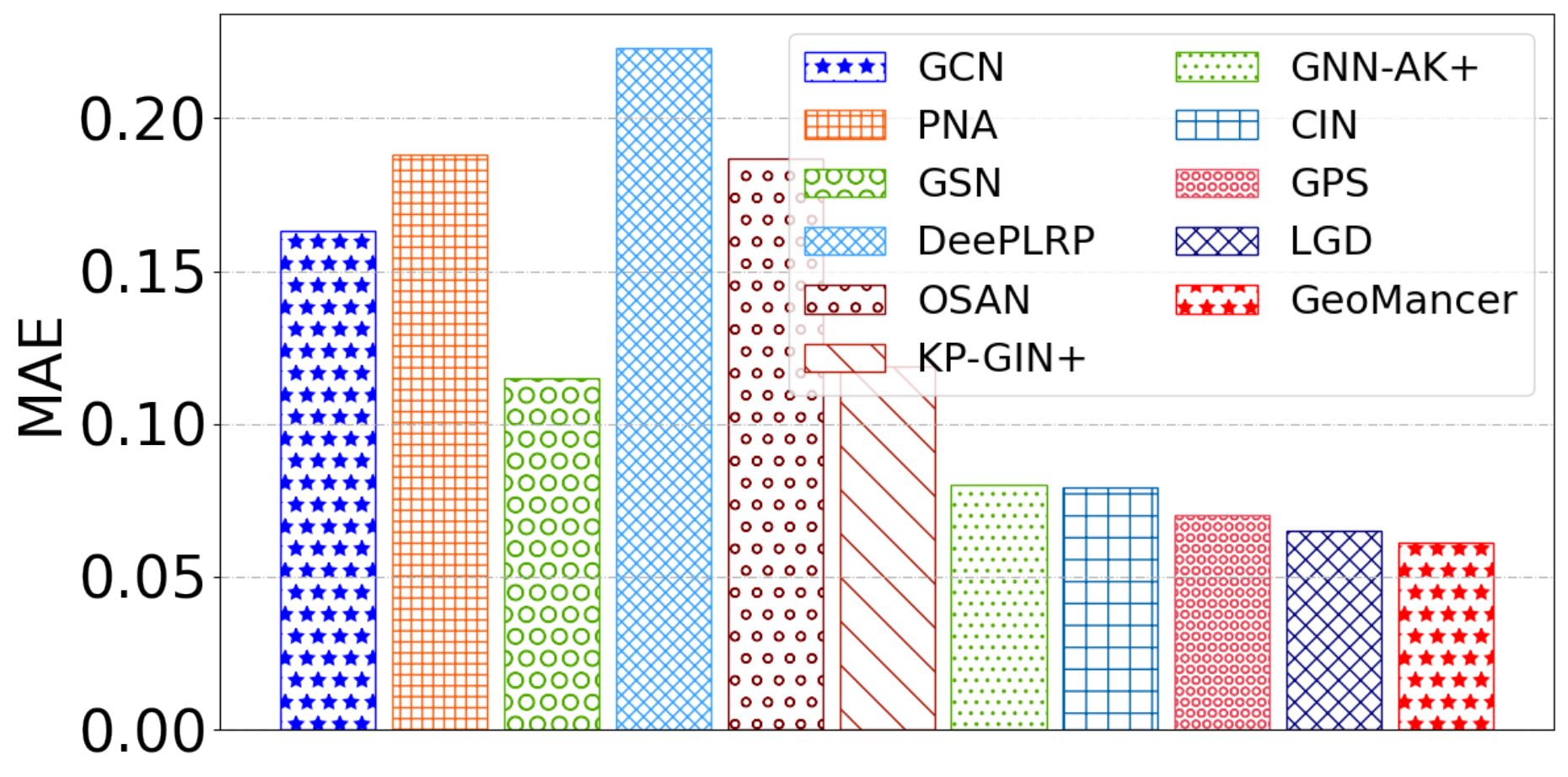}
    \caption{Comparison of graph regression task on Zinc12k dataset.}
    \label{fig:mae}
    
\end{wrapfigure}

For predictive tasks, we evaluated the graph regression task and node classification task.

\textbf{Graph Regression}.
For graph regression task, we select ZINC12k~\cite{zinc}, which is a subset of ZINC250k containing 12k molecules. The task focuses on predicting molecular properties, particularly constrained solubility, with performance evaluated by MAE. Here, we use the official split of the dataset.
As baselines, we consider a range of representative graph regression models. GIN~\cite{gin} employs a simple sum aggregator with learnable bias followed by MLP updates, achieving expressive power equivalent to the 1-WL test. PNA~\cite{pna} combines multiple neighborhood aggregators with degree-scalers, allowing the model to capture diverse structural statistics. DeepLRP~\cite{deeplrp} encodes local structural patterns by pooling over permutations of nodes within small subgraphs. OSAN~\cite{osan} introduces ordered subgraph sampling and aggregation to enhance message passing with subgraph-level information. KP-GIN+~\cite{kp-gin} extends GIN with edge-sensitive updates while aggregating peripheral K-hop subgraphs for richer context. GNN-AK+~\cite{GNN-AK+} improves expressiveness by applying a subgraph GNN to each node’s local induced neighborhood. CIN~\cite{cin} generalizes message passing beyond edges by propagating information across nodes, edges, and higher-order cells within a cell complex. GPS~\cite{graphgps} integrates local MPNN aggregation with global Transformer attention in a hybrid design. Finally, LGD~\cite{LGD} applies latent-space diffusion to jointly model graph structure and multi-level features.

As shown in Fig.~\ref{fig:mae}, our model demonstrates superior performance compared to traditional regression models. This fully demonstrates that \modelname can achieve better data understanding capabilities by effectively capturing the underlying geometric manifolds of complex multi-level data. The improvement over baselines highlights the model’s ability to integrate both local and global geometric information in its representations. Furthermore, \modelname consistently produces accurate predictions across different datasets and experimental settings, reflecting the stability of its learned embeddings. In addition, \modelname exhibits an excellent convergence rate, as described in Appendix D, making it efficient to train while maintaining high performance.

\textbf{Node Classification}.
For node classification tasks, we evaluate our model on several widely used benchmark datasets~\cite{nodebench}: Amazon Photo, a co-purchase network where nodes represent products and edges indicate frequent co-purchases;  PubMed, a biomedical citation network; and Physics, a citation network from the arXiv Physics section;Cora, a citation network of scientific publications; Citeseer, a citation network of six research fields related to computer science. These datasets provide diverse scenarios for evaluating our model's ability to capture node-level semantics. The common 60\%, 20\%, 20\% random split is adopted for the first three dataset, with the remaining adopting the standard split. We use Accuracy as the classification metric. Baselines include classic GNNs (GCN~\cite{gcn}, GAT~\cite{gat}, GraphSAINT~\cite{graphsaint}) and graph transformers (Graphormer~\cite{graphformer}, SAN~\cite{san}, GraphGPS~\cite{graphgps}, Exphormer~\cite{Exphormer}, NAGphormer~\cite{nagformer}).

The results are reported in Table~\ref{tab:classification_results}. Our model achieves state-of-the-art performance on the node classification task, surpassing not only GNNs but also Graph Transformers. This demonstrates the model's ability to effectively capture the conditional probability distribution for regression tasks. Furthermore, our approach outperforms LGD, highlighting that the Riemannian diffusion mechanism successfully identifies the Riemannian manifold better suited for node classification tasks. However, since node classification only involves modeling node-level manifolds and is relatively simple, the performance improvement is less significant compared to graph regression tasks.

\subsection{Ablation Study}
As shown in Table~\ref{tab:ablationtopo}, we further investigate the contribution of each component in \modelname.
Notably, the substantial improvement in molecular validity primarily results from the self-guidance mechanism, which effectively exploits the complex geometry of the latent space to guide the generation process. In contrast, the increase in Novelty arises from the joint effect of self-guidance and manifold-constrained conditional generation. Additionally, the Riemannian model significantly enhances the model’s ability to capture the underlying data distribution, showing better performance in FCD and NSPDK.

\subsection{Visualization}
To demonstrate the effectiveness of manifold selection, we visualize the decoupled manifolds and their weights on the ZINC12k dataset. As shown in Fig.~\ref{fig:viz}, our approach captures geometric priors with diverse curvature representations, dynamically leveraging them across task levels. For example, at the graph level, molecular solubility is mainly influenced by hyperbolic and spherical features, with Euclidean features playing a smaller role. At the node and edge levels, each manifold contributes more evenly, with the visualization highlighting how different spaces adapt to the data’s structural characteristics. These findings show that manifold selection enhances representational diversity and reveals how geometry impacts multi-level features.

\section{Conclusion}

In this work, we present GeoMancer, a Riemannian graph diffusion framework that unifies generation and prediction by explicitly modeling manifold signatures in graph data. By replacing unstable exponential mappings with a Riemannian gyrokernel and decoupling multi-level features across task-specific manifolds, our method mitigates numerical instability while preserving non-Euclidean geometric priors. Moreover, manifold-constrained diffusion and self-guided generation ensure that samples remain consistent with their underlying manifold distributions. Experiments on several datasets demonstrate the advantages of GeoMancer in classification, regression, and generation.

\section*{Acknowledgement}
 The corresponding authors are Xingcheng Fu. This paper is supported by Beijing Natural Science Foundation (QY24129), and the National Natural Science Foundation of China (No.62462007 and No.62302023), Research Fund of Guangxi Key Lab of Multi-source Information Mining \& Security (MIMS24-12), and The Basic Ability Enhancement Program for Young and Middle-aged Teachers of Guangxi (No.2024KY0073). We owe sincere thanks to all co-authors for their valuable efforts and contributions. 

\bibliographystyle{nips}
\newpage
\bibliography{ref}

% \newpage
% \bibliographystyle{nips}
% \newpage
% \bibliography{ref}

% %%%%%%%%%%%%%%%%%%%%%%%%%%%%%%%%%%%%%%%%%%%%%%%%%%%%%%%%%%%%

\newpage
\section*{NeurIPS Paper Checklist}

\begin{enumerate}

\item {\bf Claims}
    \item[] Question: Do the main claims made in the abstract and introduction accurately reflect the paper's contributions and scope?
    \item[] Answer: \answerYes{} % Replace by \answerYes{}, \answerNo{}, or \answerNA{}.
    \item[] Justification: We have fully explained our motivation and main contributions in the abstract and introduction.
    \item[] Guidelines:
    \begin{itemize}
        \item The answer NA means that the abstract and introduction do not include the claims made in the paper.
        \item The abstract and/or introduction should clearly state the claims made, including the contributions made in the paper and important assumptions and limitations. A No or NA answer to this question will not be perceived well by the reviewers. 
        \item The claims made should match theoretical and experimental results, and reflect how much the results can be expected to generalize to other settings. 
        \item It is fine to include aspirational goals as motivation as long as it is clear that these goals are not attained by the paper. 
    \end{itemize}

\item {\bf Limitations}
    \item[] Question: Does the paper discuss the limitations of the work performed by the authors?
    \item[] Answer: \answerYes{} % Replace by \answerYes{}, \answerNo{}, or \answerNA{}.
    \item[] Justification: We discussed the limitations of the method in Appendix E.
    \item[] Guidelines:
    \begin{itemize}
        \item The answer NA means that the paper has no limitation while the answer No means that the paper has limitations, but those are not discussed in the paper. 
        \item The authors are encouraged to create a separate "Limitations" section in their paper.
        \item The paper should point out any strong assumptions and how robust the results are to violations of these assumptions (e.g., independence assumptions, noiseless settings, model well-specification, asymptotic approximations only holding locally). The authors should reflect on how these assumptions might be violated in practice and what the implications would be.
        \item The authors should reflect on the scope of the claims made, e.g., if the approach was only tested on a few datasets or with a few runs. In general, empirical results often depend on implicit assumptions, which should be articulated.
        \item The authors should reflect on the factors that influence the performance of the approach. For example, a facial recognition algorithm may perform poorly when image resolution is low or images are taken in low lighting. Or a speech-to-text system might not be used reliably to provide closed captions for online lectures because it fails to handle technical jargon.
        \item The authors should discuss the computational efficiency of the proposed algorithms and how they scale with dataset size.
        \item If applicable, the authors should discuss possible limitations of their approach to address problems of privacy and fairness.
        \item While the authors might fear that complete honesty about limitations might be used by reviewers as grounds for rejection, a worse outcome might be that reviewers discover limitations that aren't acknowledged in the paper. The authors should use their best judgment and recognize that individual actions in favor of transparency play an important role in developing norms that preserve the integrity of the community. Reviewers will be specifically instructed to not penalize honesty concerning limitations.
    \end{itemize}

\item {\bf Theory assumptions and proofs}
    \item[] Question: For each theoretical result, does the paper provide the full set of assumptions and a complete (and correct) proof?
    \item[] Answer: \answerYes{} % Replace by \answerYes{}, \answerNo{}, or \answerNA{}.
    \item[] Justification: We introduced the assumpiton settings of each theorem in Section~\ref{rieman} and wrote the detailed proof process in Appendix A.
    \item[] Guidelines:
    \begin{itemize}
        \item The answer NA means that the paper does not include theoretical results. 
        \item All the theorems, formulas, and proofs in the paper should be numbered and cross-referenced.
        \item All assumptions should be clearly stated or referenced in the statement of any theorems.
        \item The proofs can either appear in the main paper or the supplemental material, but if they appear in the supplemental material, the authors are encouraged to provide a short proof sketch to provide intuition. 
        \item Inversely, any informal proof provided in the core of the paper should be complemented by formal proofs provided in appendix or supplemental material.
        \item Theorems and Lemmas that the proof relies upon should be properly referenced. 
    \end{itemize}

    \item {\bf Experimental result reproducibility}
    \item[] Question: Does the paper fully disclose all the information needed to reproduce the main experimental results of the paper to the extent that it affects the main claims and/or conclusions of the paper (regardless of whether the code and data are provided or not)?
    \item[] Answer: \answerYes{} % Replace by \answerYes{}, \answerNo{}, or \answerNA{}.
    \item[] Justification: We have provided detailed hyperparameter settings for each experiment in Appendix C and the codes in Section 4. And we have given the detailed algorithm process in Appendix B.
    \item[] Guidelines:
    \begin{itemize}
        \item The answer NA means that the paper does not include experiments.
        \item If the paper includes experiments, a No answer to this question will not be perceived well by the reviewers: Making the paper reproducible is important, regardless of whether the code and data are provided or not.
        \item If the contribution is a dataset and/or model, the authors should describe the steps taken to make their results reproducible or verifiable. 
        \item Depending on the contribution, reproducibility can be accomplished in various ways. For example, if the contribution is a novel architecture, describing the architecture fully might suffice, or if the contribution is a specific model and empirical evaluation, it may be necessary to either make it possible for others to replicate the model with the same dataset, or provide access to the model. In general. releasing code and data is often one good way to accomplish this, but reproducibility can also be provided via detailed instructions for how to replicate the results, access to a hosted model (e.g., in the case of a large language model), releasing of a model checkpoint, or other means that are appropriate to the research performed.
        \item While NeurIPS does not require releasing code, the conference does require all submissions to provide some reasonable avenue for reproducibility, which may depend on the nature of the contribution. For example
        \begin{enumerate}
            \item If the contribution is primarily a new algorithm, the paper should make it clear how to reproduce that algorithm.
            \item If the contribution is primarily a new model architecture, the paper should describe the architecture clearly and fully.
            \item If the contribution is a new model (e.g., a large language model), then there should either be a way to access this model for reproducing the results or a way to reproduce the model (e.g., with an open-source dataset or instructions for how to construct the dataset).
            \item We recognize that reproducibility may be tricky in some cases, in which case authors are welcome to describe the particular way they provide for reproducibility. In the case of closed-source models, it may be that access to the model is limited in some way (e.g., to registered users), but it should be possible for other researchers to have some path to reproducing or verifying the results.
        \end{enumerate}
    \end{itemize}

\item {\bf Open access to data and code}
    \item[] Question: Does the paper provide open access to the data and code, with sufficient instructions to faithfully reproduce the main experimental results, as described in supplemental material?
    \item[] Answer: \answerYes{} % Replace by \answerYes{}, \answerNo{}, or \answerNA{}.
    \item[] Justification: All of our codes and data are open access. The relevant code anonymous link is in Section 4.
    \item[] Guidelines:
    \begin{itemize}
        \item The answer NA means that paper does not include experiments requiring code.
        \item Please see the NeurIPS code and data submission guidelines (\url{https://nips.cc/public/guides/CodeSubmissionPolicy}) for more details.
        \item While we encourage the release of code and data, we understand that this might not be possible, so “No” is an acceptable answer. Papers cannot be rejected simply for not including code, unless this is central to the contribution (e.g., for a new open-source benchmark).
        \item The instructions should contain the exact command and environment needed to run to reproduce the results. See the NeurIPS code and data submission guidelines (\url{https://nips.cc/public/guides/CodeSubmissionPolicy}) for more details.
        \item The authors should provide instructions on data access and preparation, including how to access the raw data, preprocessed data, intermediate data, and generated data, etc.
        \item The authors should provide scripts to reproduce all experimental results for the new proposed method and baselines. If only a subset of experiments are reproducible, they should state which ones are omitted from the script and why.
        \item At submission time, to preserve anonymity, the authors should release anonymized versions (if applicable).
        \item Providing as much information as possible in supplemental material (appended to the paper) is recommended, but including URLs to data and code is permitted.
    \end{itemize}

\item {\bf Experimental setting/details}
    \item[] Question: Does the paper specify all the training and test details (e.g., data splits, hyperparameters, how they were chosen, type of optimizer, etc.) necessary to understand the results?
    \item[] Answer: \answerYes{} % Replace by \answerYes{}, \answerNo{}, or \answerNA{}.
    \item[] Justification: We have provided detailed instructions for the experimental setup in Section~\ref{sec:expsetup} and Appendix C.
    \item[] Guidelines:
    \begin{itemize}
        \item The answer NA means that the paper does not include experiments.
        \item The experimental setting should be presented in the core of the paper to a level of detail that is necessary to appreciate the results and make sense of them.
        \item The full details can be provided either with the code, in appendix, or as supplemental material.
    \end{itemize}

\item {\bf Experiment statistical significance}
    \item[] Question: Does the paper report error bars suitably and correctly defined or other appropriate information about the statistical significance of the experiments?
    \item[] Answer:  \answerYes{}% Replace by \answerYes{}, \answerNo{}, or \answerNA{}.
    \item[] Justification: Our results have all been verified through multiple experiments to report the experimental error bars.
    \item[] Guidelines:
    \begin{itemize}
        \item The answer NA means that the paper does not include experiments.
        \item The authors should answer "Yes" if the results are accompanied by error bars, confidence intervals, or statistical significance tests, at least for the experiments that support the main claims of the paper.
        \item The factors of variability that the error bars are capturing should be clearly stated (for example, train/test split, initialization, random drawing of some parameter, or overall run with given experimental conditions).
        \item The method for calculating the error bars should be explained (closed form formula, call to a library function, bootstrap, etc.)
        \item The assumptions made should be given (e.g., Normally distributed errors).
        \item It should be clear whether the error bar is the standard deviation or the standard error of the mean.
        \item It is OK to report 1-sigma error bars, but one should state it. The authors should preferably report a 2-sigma error bar than state that they have a 96\% CI, if the hypothesis of Normality of errors is not verified.
        \item For asymmetric distributions, the authors should be careful not to show in tables or figures symmetric error bars that would yield results that are out of range (e.g. negative error rates).
        \item If error bars are reported in tables or plots, The authors should explain in the text how they were calculated and reference the corresponding figures or tables in the text.
    \end{itemize}

\item {\bf Experiments compute resources}
    \item[] Question: For each experiment, does the paper provide sufficient information on the computer resources (type of compute workers, memory, time of execution) needed to reproduce the experiments?
    \item[] Answer: \answerYes{} % Replace by \answerYes{}, \answerNo{}, or \answerNA{}.
    \item[] Justification: We described the computing resources we used in the experimental setup in Section~\ref{sec:expsetup}.
    \item[] Guidelines:
    \begin{itemize}
        \item The answer NA means that the paper does not include experiments.
        \item The paper should indicate the type of compute workers CPU or GPU, internal cluster, or cloud provider, including relevant memory and storage.
        \item The paper should provide the amount of compute required for each of the individual experimental runs as well as estimate the total compute. 
        \item The paper should disclose whether the full research project required more compute than the experiments reported in the paper (e.g., preliminary or failed experiments that didn't make it into the paper). 
    \end{itemize}
    
\item {\bf Code of ethics}
    \item[] Question: Does the research conducted in the paper conform, in every respect, with the NeurIPS Code of Ethics \url{https://neurips.cc/public/EthicsGuidelines}?
    \item[] Answer: \answerYes{} % Replace by \answerYes{}, \answerNo{}, or \answerNA{}.
    \item[] Justification: Our research complies with ethical guidelines and does not involve related issues.
    \item[] Guidelines:
    \begin{itemize}
        \item The answer NA means that the authors have not reviewed the NeurIPS Code of Ethics.
        \item If the authors answer No, they should explain the special circumstances that require a deviation from the Code of Ethics.
        \item The authors should make sure to preserve anonymity (e.g., if there is a special consideration due to laws or regulations in their jurisdiction).
    \end{itemize}

\item {\bf Broader impacts}
    \item[] Question: Does the paper discuss both potential positive societal impacts and negative societal impacts of the work performed?
    \item[] Answer: \answerYes{} % Replace by \answerYes{}, \answerNo{}, or \answerNA{}.
    \item[] Justification: We have detailed our impacts in Appendix F.
    \item[] Guidelines:
    \begin{itemize}
        \item The answer NA means that there is no societal impact of the work performed.
        \item If the authors answer NA or No, they should explain why their work has no societal impact or why the paper does not address societal impact.
        \item Examples of negative societal impacts include potential malicious or unintended uses (e.g., disinformation, generating fake profiles, surveillance), fairness considerations (e.g., deployment of technologies that could make decisions that unfairly impact specific groups), privacy considerations, and security considerations.
        \item The conference expects that many papers will be foundational research and not tied to particular applications, let alone deployments. However, if there is a direct path to any negative applications, the authors should point it out. For example, it is legitimate to point out that an improvement in the quality of generative models could be used to generate deepfakes for disinformation. On the other hand, it is not needed to point out that a generic algorithm for optimizing neural networks could enable people to train models that generate Deepfakes faster.
        \item The authors should consider possible harms that could arise when the technology is being used as intended and functioning correctly, harms that could arise when the technology is being used as intended but gives incorrect results, and harms following from (intentional or unintentional) misuse of the technology.
        \item If there are negative societal impacts, the authors could also discuss possible mitigation strategies (e.g., gated release of models, providing defenses in addition to attacks, mechanisms for monitoring misuse, mechanisms to monitor how a system learns from feedback over time, improving the efficiency and accessibility of ML).
    \end{itemize}
    
\item {\bf Safeguards}
    \item[] Question: Does the paper describe safeguards that have been put in place for responsible release of data or models that have a high risk for misuse (e.g., pretrained language models, image generators, or scraped datasets)?
    \item[] Answer: \answerNA{} % Replace by \answerYes{}, \answerNo{}, or \answerNA{}.
    \item[] Justification: Our research field does not involve security issues. All the data is open access and there is no need to consider security risk issues.
    \item[] Guidelines:
    \begin{itemize}
        \item The answer NA means that the paper poses no such risks.
        \item Released models that have a high risk for misuse or dual-use should be released with necessary safeguards to allow for controlled use of the model, for example by requiring that users adhere to usage guidelines or restrictions to access the model or implementing safety filters. 
        \item Datasets that have been scraped from the Internet could pose safety risks. The authors should describe how they avoided releasing unsafe images.
        \item We recognize that providing effective safeguards is challenging, and many papers do not require this, but we encourage authors to take this into account and make a best faith effort.
    \end{itemize}

\item {\bf Licenses for existing assets}
    \item[] Question: Are the creators or original owners of assets (e.g., code, data, models), used in the paper, properly credited and are the license and terms of use explicitly mentioned and properly respected?
    \item[] Answer: \answerYes{} % Replace by \answerYes{}, \answerNo{}, or \answerNA{}.
    \item[] Justification: All the assets in the paper have complied with the relevant terms.
    \item[] Guidelines:
    \begin{itemize}
        \item The answer NA means that the paper does not use existing assets.
        \item The authors should cite the original paper that produced the code package or dataset.
        \item The authors should state which version of the asset is used and, if possible, include a URL.
        \item The name of the license (e.g., CC-BY 4.0) should be included for each asset.
        \item For scraped data from a particular source (e.g., website), the copyright and terms of service of that source should be provided.
        \item If assets are released, the license, copyright information, and terms of use in the package should be provided. For popular datasets, \url{paperswithcode.com/datasets} has curated licenses for some datasets. Their licensing guide can help determine the license of a dataset.
        \item For existing datasets that are re-packaged, both the original license and the license of the derived asset (if it has changed) should be provided.
        \item If this information is not available online, the authors are encouraged to reach out to the asset's creators.
    \end{itemize}

\item {\bf New assets}
    \item[] Question: Are new assets introduced in the paper well documented and is the documentation provided alongside the assets?
    \item[] Answer: \answerNA{} % Replace by \answerYes{}, \answerNo{}, or \answerNA{}.
    \item[] Justification: The anonymous code we submitted will be released after being accepted, so there are no issues about the license now.
    \item[] Guidelines:
    \begin{itemize}
        \item The answer NA means that the paper does not release new assets.
        \item Researchers should communicate the details of the dataset/code/model as part of their submissions via structured templates. This includes details about training, license, limitations, etc. 
        \item The paper should discuss whether and how consent was obtained from people whose asset is used.
        \item At submission time, remember to anonymize your assets (if applicable). You can either create an anonymized URL or include an anonymized zip file.
    \end{itemize}

\item {\bf Crowdsourcing and research with human subjects}
    \item[] Question: For crowdsourcing experiments and research with human subjects, does the paper include the full text of instructions given to participants and screenshots, if applicable, as well as details about compensation (if any)? 
    \item[] Answer: \answerNA{} % Replace by \answerYes{}, \answerNo{}, or \answerNA{}.
    \item[] Justification: There are no human subjects in our paper.
    \item[] Guidelines:
    \begin{itemize}
        \item The answer NA means that the paper does not involve crowdsourcing nor research with human subjects.
        \item Including this information in the supplemental material is fine, but if the main contribution of the paper involves human subjects, then as much detail as possible should be included in the main paper. 
        \item According to the NeurIPS Code of Ethics, workers involved in data collection, curation, or other labor should be paid at least the minimum wage in the country of the data collector. 
    \end{itemize}

\item {\bf Institutional review board (IRB) approvals or equivalent for research with human subjects}
    \item[] Question: Does the paper describe potential risks incurred by study participants, whether such risks were disclosed to the subjects, and whether Institutional Review Board (IRB) approvals (or an equivalent approval/review based on the requirements of your country or institution) were obtained?
    \item[] Answer: \answerNA{} % Replace by \answerYes{}, \answerNo{}, or \answerNA{}.
    \item[] Justification: There are no human subjects in our paper.
    \item[] Guidelines:
    \begin{itemize}
        \item The answer NA means that the paper does not involve crowdsourcing nor research with human subjects.
        \item Depending on the country in which research is conducted, IRB approval (or equivalent) may be required for any human subjects research. If you obtained IRB approval, you should clearly state this in the paper. 
        \item We recognize that the procedures for this may vary significantly between institutions and locations, and we expect authors to adhere to the NeurIPS Code of Ethics and the guidelines for their institution. 
        \item For initial submissions, do not include any information that would break anonymity (if applicable), such as the institution conducting the review.
    \end{itemize}

\item {\bf Declaration of LLM usage}
    \item[] Question: Does the paper describe the usage of LLMs if it is an important, original, or non-standard component of the core methods in this research? Note that if the LLM is used only for writing, editing, or formatting purposes and does not impact the core methodology, scientific rigorousness, or originality of the research, declaration is not required.
    %this research? 
    \item[] Answer: \answerNA{} % Replace by \answerYes{}, \answerNo{}, or \answerNA{}.
    \item[] Justification: We just used large language models to modify the writings of the paper, so there is no need to declare it.
    \item[] Guidelines:
    \begin{itemize}
        \item The answer NA means that the core method development in this research does not involve LLMs as any important, original, or non-standard components.
        \item Please refer to our LLM policy (\url{https://neurips.cc/Conferences/2025/LLM}) for what should or should not be described.
    \end{itemize}

\end{enumerate}

\newpage
\appendix
\section{Proof and Derivation}
% \begin{proposition}
% \label{pro1}
% Let $f_{i}:\mathcal{M}_{i}\rightarrow\mathbb{R}, i\in\{1,2,\ldots,L\}$ be twice-differentiable functions such that $\Delta_{\mathcal{M}_{i}}f_{i}=\lambda_i f_{i}$. Take $\pi_{i}:\mathcal{M}\rightarrow\mathcal{M}_{i}$ to be
% the projection of $M$ onto $M_i$, We can then
% define the natural extension of $f_i$ to $M$ via $g_{i}=f_{i}\circ\pi_{i}$. It follows that 

% \begin{equation}
% \Delta_{\mathcal{M}}(\prod_{i=1}^{L} g_i)=(\sum_{i=1}^{L} \lambda_{i})\prod_{i=1}^{L}g_i.    
% \end{equation}

% \end{proposition}

In this section ,we proof the Proposition 3.1.

\textbf{Definition} Given a Riemannian manifold $(M^n,g)$ and a function mapping $\varphi\in C^{\infty}(M)$, the differential map  is linear for all $x\in M$. The gradient $\nabla_{g}\varphi$ of a function f is denoted as:
\begin{equation}
    \langle\nabla_g\varphi(x_i),X_{x_i}\rangle_{g(x)}=d_{x_i}\varphi(X_{x_i})
\end{equation}.

where $d_{x_i}\varphi:T_{x_i}M\rightarrow\mathbb{R}$ is linear for all $X_{x_i} \in T_{x_i}M$.

The divergence is the operator $\mathrm{div}_{g}:\Gamma_{C^{\infty}}(TM)\rightarrow C^{\infty}(M)$ making 
\begin{equation}
d(\iota_{_{X}}\omega_{g})=\mathrm{div}_{g}X\cdot\omega_{g}\quad\mathrm{for~all}X\in\Gamma_{C^{\infty}}(TM).
\end{equation}
where $\Gamma_{C^{\infty}}(TM)$ represents the space of smooth sections of the tangent bundle.

The Laplacian on $(M,g)$ is the operator defined as $\Delta_{g}=-\mathrm{div}_{g}\circ\nabla_{g}$.

\begin{lemma} 
\label{lemma1}
For any smooth functions \(\varphi, \psi \in C^{\infty}(M)\), the Laplace-Beltrami operator \(\Delta_g\) satisfies the following product rule:

\begin{equation}
\Delta_{g}(\varphi\cdot\psi)=\psi\Delta_{g}\varphi+\varphi\Delta_{g}\psi-2\left\langle\nabla_{g}\varphi,\nabla_{g}\psi\right\rangle_{g}.
\end{equation}

This identity generalizes the classical product rule of the Laplacian to Riemannian manifolds, where\(\langle \cdot, \cdot \rangle_g\) represents the Riemannian inner product with the metric \(g\). 
\end{lemma}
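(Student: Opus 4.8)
\textbf{Proof proposal for Lemma~\ref{lemma1}.}

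The plan is to establish the product rule for the Laplace--Beltrami operator by working directly from the definition $\Delta_g = -\mathrm{div}_g \circ \nabla_g$ and exploiting the first-order Leibniz rules that the gradient and divergence already satisfy. First I would observe that the gradient is a first-order derivation, so $\nabla_g(\varphi\psi) = \psi\,\nabla_g\varphi + \varphi\,\nabla_g\psi$; this follows immediately from the defining relation $\langle \nabla_g(\varphi\psi), X\rangle_g = d(\varphi\psi)(X) = \psi\, d\varphi(X) + \varphi\, d\psi(X)$ together with non-degeneracy of the metric. Next I would recall (or prove in one line from the defining identity $d(\iota_X \omega_g) = \mathrm{div}_g X \cdot \omega_g$ and the Cartan calculus) the companion Leibniz rule for divergence against a scalar multiplier: $\mathrm{div}_g(f\,X) = f\,\mathrm{div}_g X + \langle \nabla_g f, X\rangle_g$.

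With these two facts in hand, the computation is mechanical. Applying $\nabla_g$ to $\varphi\psi$ and then $-\mathrm{div}_g$:
\begin{align}
\Delta_g(\varphi\psi)
&= -\mathrm{div}_g\bigl(\psi\,\nabla_g\varphi + \varphi\,\nabla_g\psi\bigr) \notag\\
&= -\psi\,\mathrm{div}_g(\nabla_g\varphi) - \langle\nabla_g\psi,\nabla_g\varphi\rangle_g
   -\varphi\,\mathrm{div}_g(\nabla_g\psi) - \langle\nabla_g\varphi,\nabla_g\psi\rangle_g \notag\\
&= \psi\,\Delta_g\varphi + \varphi\,\Delta_g\psi - 2\langle\nabla_g\varphi,\nabla_g\psi\rangle_g,
\end{align}
where in the last line I used the definition of $\Delta_g$ and the symmetry of the Riemannian inner product. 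This is exactly the claimed identity.

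The only step that requires genuine care — and thus the main obstacle — is justifying the scalar Leibniz rule for the divergence, $\mathrm{div}_g(fX) = f\,\mathrm{div}_g X + \langle\nabla_g f, X\rangle_g$, purely from the volume-form characterization given in the excerpt. The cleanest route is to write $\iota_{fX}\omega_g = f\,\iota_X\omega_g$, apply the exterior derivative, and use the graded Leibniz rule $d(f\,\iota_X\omega_g) = df\wedge\iota_X\omega_g + f\,d(\iota_X\omega_g)$; one then identifies $df\wedge\iota_X\omega_g = \langle\nabla_g f, X\rangle_g\,\omega_g$ (a standard pointwise linear-algebra identity on an oriented inner-product space, checkable in an orthonormal frame) and $d(\iota_X\omega_g) = \mathrm{div}_g X\cdot\omega_g$ by definition, then divides by $\omega_g$. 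Everything else is bookkeeping. If one prefers to avoid differential forms entirely, an alternative is to verify all three identities in geodesic normal coordinates at an arbitrary point, where $\Delta_g$ reduces to the Euclidean Laplacian at that point and the classical product rule applies directly; I would mention this as a fallback but pursue the coordinate-free argument as the primary proof.
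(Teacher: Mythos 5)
Your proposal is correct and takes essentially the same route as the paper: both arguments rest on the definition $\Delta_g=-\mathrm{div}_g\circ\nabla_g$, the gradient Leibniz rule $\nabla_g(\varphi\psi)=\psi\nabla_g\varphi+\varphi\nabla_g\psi$, and the scalar Leibniz rule $\mathrm{div}_g(fX)=f\,\mathrm{div}_g X+\langle\nabla_g f,X\rangle_g$, combined by the same mechanical computation (your final display is in fact cleaner than the paper's, which has a transient sign slip in an intermediate line before arriving at the same identity). The only difference is how the divergence Leibniz rule is justified — you do it coordinate-freely via $d(f\,\iota_X\omega_g)=df\wedge\iota_X\omega_g+f\,d(\iota_X\omega_g)$ and the identity $df\wedge\iota_X\omega_g=\langle\nabla_g f,X\rangle_g\,\omega_g$, whereas the paper first derives the local expression $\mathrm{div}_g X=\frac{1}{\sqrt{|\det g|}}\sum_i\partial_{x_i}\bigl(b_i\sqrt{|\det g|}\bigr)$ from the volume form and reads the rule off from that — which is a cosmetic rather than substantive distinction.
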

\begin{proof}
    To derive the formula for the divergence of a vector field on a Riemannian manifold, we begin by considering a smooth vector field 
\[
X = \sum_{j=1}^{n}b_{j}\frac{\partial}{\partial x_{j}}\in\Gamma_{C^{\infty}}(TM).
\]
We express the contraction of \(X\) with the volume form \(\omega_g\):

\begin{equation}\begin{aligned}
\iota_{_{X}}\omega_{g}\left(\frac{\partial}{\partial x_{1}},\ldots,\frac{\hat{\partial}}{\partial x_{i}},\ldots,\frac{\partial}{\partial x_{n}}\right) & =\omega_{g}\left(X,\frac{\partial}{\partial x_{1}},\ldots,\frac{\hat{\partial}}{\partial x_{i}},\ldots,\frac{\partial}{\partial x_{n}}\right) \\ 
 & =(-1)^{i-1}\omega_{g}\left(\frac{\partial}{\partial x_{1}},\ldots,X,\ldots,\frac{\partial}{\partial x_{n}}\right) \\ 
 & =(-1)^{i-1}\sqrt{|\det g|}dx_1\wedge\cdots\wedge dx_n\left(\frac{\partial}{\partial x_1},\ldots,X,\ldots,\frac{\partial}{\partial x_n}\right) \\ 
 & =b_{i}(-1)^{i-1}\sqrt{|\det g|}.
\end{aligned}\end{equation}

Next, we compute the exterior derivative of the contraction:

\begin{equation}\begin{aligned}
d(\iota_{X}\omega_{g}) & =d\left(\sum_{i=1}^{n}b_{i}(-1)^{i-1}\sqrt{|\det g|}dx_{1}\wedge\cdots\wedge\hat{dx_{i}}\wedge\cdots\wedge dx_{n}\right) \\ 
 & =\sum_{i=1}^{n}(-1)^{i-1}\frac{\partial}{\partial x_{i}}(b_{i}\sqrt{|\det g|})dx_{i}\wedge dx_{1}\wedge\cdots\wedge\hat{dx_{i}}\wedge\cdots\wedge dx_{n} \\ 
 & =\sum_{i=1}^{n}\frac{\partial}{\partial x_{i}}(b_{i}\sqrt{|\det g|})dx_{1}\wedge\cdots\wedge dx_{n} \\ 
 & =\frac{1}{\sqrt{|\det g|}}\sum_{i=1}^{n}\frac{\partial}{\partial x_{i}}(b_{i}\sqrt{|\det g|})\cdot\omega_{g}.
\end{aligned}\end{equation}

Since the divergence of a vector field is defined by \(\operatorname{div}_g X = d(\iota_X \omega_g)/\omega_g\), we obtain:

\begin{equation}
\mathrm{div}_{g}X=\frac{1}{\sqrt{|\det g|}}\sum_{i=1}^{n}\frac{\partial}{\partial x_{i}}(b_{i}\sqrt{|\det g|}).
\end{equation}

Now, we generalize this formula to the case where the vector field is multiplied by a smooth function \(\varphi\):

\begin{equation}
\operatorname{div}_{g}(\varphi X)=\varphi\operatorname{div}_{g}X+\left\langle\nabla_{g}\varphi,X\right\rangle_{g}.
\end{equation}

This formula plays a crucial role in deriving the Laplace-Beltrami operator's product rule. Using the definition \(\Delta_g = -\operatorname{div}_g \nabla_g\), we proceed to compute \(\Delta_g(\varphi \psi)\):

\begin{equation}
\begin{aligned}
\Delta_g(\varphi \psi) 
&= -{\rm div}_g\left(\nabla_g(\varphi \psi)\right) \\ 
&= -{\rm div}_g\left(\varphi \nabla_g \psi + \psi \nabla_g \varphi\right) \\ 
&= -\left[{\rm div}_g(\varphi \nabla_g \psi) + {\rm div}_g(\psi \nabla_g \varphi)\right] \\ 
&= -\Big[\varphi \,{\rm div}_g(\nabla_g \psi) + \langle \nabla_g \varphi, \nabla_g \psi \rangle_g + \psi \,{\rm div}_g(\nabla_g \varphi) + \langle \nabla_g \psi, \nabla_g \varphi \rangle_g\Big] \\ 
&= -\Big[\varphi \Delta_g \psi + \psi \Delta_g \varphi + 2\langle \nabla_g \varphi, \nabla_g \psi \rangle_g\Big] \\ 
&= \psi \Delta_g \varphi + \varphi \Delta_g \psi - 2\langle \nabla_g \varphi, \nabla_g \psi \rangle_g.
\end{aligned}
\end{equation}

Thus, we have established the product rule for the Laplace-Beltrami operator on a Riemannian manifold.

\end{proof}

Here we begin to proof the proposition 3.1:
\begin{proof}
According to mathematical induction, 

\textbf{Base Case: L=1}

For $L=1$, the theorem reduces to the action of the Laplace-Beltrami operator on a single function $g_1$. By definition, if $g_1$ is an eigenfunction of $\Delta_{\mathcal{M}}$ with eigenvalue $\lambda_1$, we have:

\begin{equation}\Delta_{\mathcal{M}}g_1=\lambda_1g_1,\end{equation}

which trivially satisfies the theorem. This establishes the base case.

\textbf{Inductive Hypothesis: L=N}

Assume the theorem holds for  $L=N$, i,e.,for any product of  $N$ eigenfunctions $g_1,g_2,\ldots,g_N$ with corresponding eigenvalues $\lambda_1,\lambda_2,\ldots,\lambda_N$, the Laplace-Beltrami operator acts as:
\begin{equation}
\label{eq:nproof}\Delta_{\mathcal{M}}\left(\prod_{i=1}^Ng_i\right)=\left(\sum_{i=1}^N\lambda_i\right)\prod_{i=1}^Ng_i.\end{equation}

\textbf{Inductive Step: L=N+1}

We now prove the theorem for $L=N+1$, we can get:
\begin{equation}
\Delta_{\mathcal{M}}(\prod_{i=1}^{N+1} g_i)=\Delta_{\mathcal{M}}\left[(\prod_{i=1}^{N} g_i)g_{N+1}\right].
\end{equation}

Considering the Eq. ~\eqref{eq:nproof} and Lemma~\ref{lemma1} , we can derive it as:

\begin{equation}
\begin{aligned}
\Delta_{\mathcal{M}}(\prod_{i=1}^{N+1} g_i)&=\Delta_{\mathcal{M}}\left[\left(\prod_{i=1}^{N} g_i\right)g_{N+1}\right]\\
&=g_{N+1}\Delta_{\mathcal{M}}\prod_{i=1}^{N} g_i+\prod_{i=1}^{N} g_i\Delta_{\mathcal{M}}g_{N+1}-2\left\langle\nabla_{\mathcal{M}}\prod_{i=1}^{N} g_i,\nabla_{\mathcal{M}}g_{N+1}\right\rangle_{g}\\
&=g_{N+1}(\sum_{i=1}^{N} \lambda_{i})\prod_{i=1}^{N}g_i+(\prod_{i=1}^{N} g_i) \lambda_{g_{N+1}}g_{N+1} \\
&=(\sum_{i=1}^{N} \lambda_{i})\prod_{i=1}^{N+1}g_i+\lambda_{N+1}\prod_{i=1}^{N+1}g_i\\
&=(\sum_{i=1}^{N+1} \lambda_{i})\prod_{i=1}^{N+1}g_i
\end{aligned}
\end{equation}

It can be observed that the theorem holds for $L=N+1$. Thus, we have successfully proven the proposition 3.1.
\end{proof}

\section{Preliminary}
\subsection{Preliminary of Riemannian Geometry}
\label{app:description}
In this section, we provide a detailed introduction to Riemannian geometry.

A smooth manifold $M$ is termed a Riemannian manifold when equipped with  a Riemannian metric $g$. Curvature $c$ is a crucial measure that quantifies the extent of geodesic bending.
For each point $x \in M$, there exists a tangent space $T_xM \subseteq \mathbb{R}^d$ that surrounds $x$, where the metric $g$ is applied to determine the manifold's shape. The relationship between the tangent space and the manifold is established through exponential and logarithmic maps. Specifically, the exponential map at point $x$, represented as $\exp_x^c(\cdot): T_xM \to M$, transforms points from the tangent space into the manifold, while the logarithmic map $\log_x^c(\cdot)=(\exp_x^c(\cdot))^{-1}$ serves as its inverse..

In this paper, we use three geometric spaces of different curvature to form a product Riemannian manifold space: Euclidean space ($c=0$), hyperbolic space ($c<0$), and spherical space ($c>0$).

\textbf{Hyperbolic space}.
A hyperbolic space is defined as $\mathbb{H}_c^{d}{=\{\mathbf{x}_{p}\in\mathbb{R}^{d+1}:\langle\mathbf{x}_{p},\mathbf{x}_{p}\rangle_{\mathcal{L}}=1/c\}},$ where $d$ represents the dimension and the inner product is defined as $\langle\mathbf{x},\mathbf{y}\rangle_{\mathcal{L}}=-x_{1}y_{1}+\sum_{j=2}x_{j}y_{j}$). In a hyperbolic space, The geodesic distance between the two points is:

\begin{equation}
d(x,y)=\frac1{\sqrt{-c}}\operatorname{arccosh}\left(c*\langle\mathbf{x},\mathbf{y}\rangle_{\mathcal{L}}\right).
\end{equation}

The exponential map in hyperbolic space is defined as:
\begin{equation}
exp_{x_p}^c(x)=\cosh\left(\sqrt{-c}||\mathbf{x}||\right)\mathbf{x}_p+\sinh\left(\sqrt{-c}||\mathbf{x}||\right)\frac{\mathbf{x}}{\sqrt{-c}||\mathbf{x}||}.
\end{equation}

\textbf{Sphere space}. 
Sphere space is defined as $\mathbb{S}_{c}^{d} = \{\mathbf{x}_{p} \in \mathbb{R}^{d+1} : \langle\mathbf{x}_{p},\mathbf{x}_{p}\rangle_{\mathbb{S}} = 1/c\}$, where the inner product is the standard Euclidean inner product 
$\langle\mathbf{x},\mathbf{y}\rangle_{\mathbb{S}}=\sum_{j=1}^{d+1}x_{j}y_{j}$.The geodesic distance between the two points is:

\begin{equation}d(x,y)=\frac{1}{\sqrt{c}}\arccos\left(c\langle\mathbf{x},\mathbf{y}\rangle_{\mathbb{S}}\right).\end{equation}

The exponential map in spherical space is defined as:

\begin{equation}
exp_{x_p}^c(x)=\cosh\left(\sqrt{c}||\mathbf{x}||\right)\mathbf{x}_p+\sinh\left(\sqrt{c}||\mathbf{x}||\right)\frac{\mathbf{x}}{\sqrt{-c}||\mathbf{x}||}.
\end{equation}

A product manifold is the Cartesian product $\mathcal{P}=\times_{i=1}^{n_{\mathcal{P}}}\mathcal{M}_{c_{i}}^{d_{i}}$, where $c_i$ and $d_i$ are the curvature and dimensionality of the manifold $\mathcal{M}_{K_{i}}^{c_{i}}$ respectively. If we restrict $\mathcal{P}$ to be composed of the Euclidean plane $\mathbb{E}_{c_{\mathrm{E}}}^{d_{\mathrm{E}}}$, hyperboloids $\mathbb{H}_{c_{j}^{\mathrm{H}}}^{d_{j}^{\mathrm{H}}}$ and hyperspheres $\mathbb{S}_{c_{j}^{\mathrm{S}}}^{d_{j}^{\mathrm{S}}}$ of constant curvature, we can represent an arbitrary product manifold of model spaces as such:
\begin{equation}
\mathcal{P}=\mathbb{E}_{c_{\mathbb{E}}}^{d_{\mathbb{E}}}\times\left(\underset{j=1}{\operatorname*{\overset{n_{\mathbb{H}}}{\operatorname*{\operatorname*{\times}}}}}\mathbb{H}_{c_{j}^{\mathbb{H}}}^{d_{j}^{\mathbb{H}}}\right)\times\left(\underset{k=1}{\operatorname*{\overset{n_{\mathbb{S}}}{\operatorname*{\operatorname*{\times}}}}}\mathbb{S}_{c_{k}^{\mathbb{S}}}^{d_{k}^{\mathbb{S}}}\right)
\end{equation}

In Riemannian machine learning, each layer requires the conversion between exponential and logarithmic mappings. Although this process is conceptually natural, it is computationally complex and prone to instability. Notably, the equations associated with exponential mappings often lead to instability issues, such as values becoming excessively large or small, resulting in NaN (Not a Number) problems. Consequently, it is frequently necessary to meticulously identify appropriate hyperparameters to mitigate these issues.

\subsection{Kernel Method}
\begin{theorem}
\label{the:bocher}
Bochner’s theorem~\cite{rudin}: For any shift-invariant continuous kernel $k(\boldsymbol{x},\boldsymbol{y})=k(\boldsymbol{x}-\boldsymbol{y})$ defined on $\mathbb{R}^{n}$,if $p(\omega)$ is its Fourier transform and ${\xi}_{\omega}(\boldsymbol{x})=\exp(i\langle\boldsymbol{\omega},\boldsymbol{x}\rangle)$. then $k$ is positive definite if and only if $p\geq0$. In this case if we sample $\omega$ according to the distribution proportional to  $p(\omega)$, the kernel $k$ can be expressed as:

\begin{equation}\begin{aligned}
k(\boldsymbol{x}-\boldsymbol{y})=\int_{\mathbb{R}^n}p(\boldsymbol{\omega})\exp(i\langle\boldsymbol{\omega},\boldsymbol{x}-\boldsymbol{y}\rangle)d\boldsymbol{\omega}=k(\boldsymbol{0})\cdot\mathbb{E}_{\boldsymbol{\omega}\sim p}\left[\xi_{\boldsymbol{\omega}}(\boldsymbol{x})\xi_{\boldsymbol{\omega}}(\boldsymbol{y})^*\right].
\end{aligned}\end{equation}

\end{theorem}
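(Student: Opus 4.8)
The plan is to establish Bochner's theorem in its classical measure‑theoretic form: I will identify ``$p(\boldsymbol\omega)\,d\boldsymbol\omega$'' with a finite non‑negative Borel measure $\mu$ on $\mathbb R^n$, so that the phrase ``$p$ is the Fourier transform of $k$'' is read as $k(\mathbf x)=\int_{\mathbb R^n}e^{i\langle\boldsymbol\omega,\mathbf x\rangle}\,d\mu(\boldsymbol\omega)$ and ``$p\ge 0$'' as $\mu\ge 0$ (the absolutely continuous case recovering the density $p$). I then prove the two implications separately and read off the random‑features identity as a corollary.

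For the converse direction ($\mu\ge 0\Rightarrow k$ positive definite) I would substitute the integral representation into the defining quadratic form: for arbitrary $\mathbf x_1,\dots,\mathbf x_m$ and $c_1,\dots,c_m\in\mathbb C$, interchanging the finite sum with the integral (legitimate since $\mu$ is finite) gives $\sum_{j,l}c_j\overline{c_l}\,k(\mathbf x_j-\mathbf x_l)=\int_{\mathbb R^n}\bigl|\sum_j c_j e^{i\langle\boldsymbol\omega,\mathbf x_j\rangle}\bigr|^2\,d\mu(\boldsymbol\omega)\ge 0$, which is exactly positive definiteness. Taking $m=1$, $\mathbf x_1=\mathbf 0$, $c_1=1$ yields $k(\mathbf 0)=\mu(\mathbb R^n)$, so $\mu/k(\mathbf 0)$ is a probability measure; and since $\xi_{\boldsymbol\omega}(\mathbf x)\xi_{\boldsymbol\omega}(\mathbf y)^{*}=e^{i\langle\boldsymbol\omega,\mathbf x-\mathbf y\rangle}$, the formula $k(\mathbf x-\mathbf y)=k(\mathbf 0)\,\mathbb E_{\boldsymbol\omega\sim\mu/k(\mathbf 0)}[\xi_{\boldsymbol\omega}(\mathbf x)\xi_{\boldsymbol\omega}(\mathbf y)^{*}]$ in the statement is immediate.

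The forward direction ($k$ continuous and positive definite $\Rightarrow\mu\ge 0$) is the substantive part, and I would run the standard regularization‑and‑compactness argument. First I extract the elementary consequences of positive definiteness: $k(-\mathbf x)=\overline{k(\mathbf x)}$, $|k(\mathbf x)|\le k(\mathbf 0)$, and, by approximating the double integral by Riemann sums each $\ge 0$, the inequality $\int\!\!\int k(\mathbf x-\mathbf y)f(\mathbf x)\overline{f(\mathbf y)}\,d\mathbf x\,d\mathbf y\ge 0$ for every $f\in C_c(\mathbb R^n)$. Next, for $\varepsilon>0$ I define the Gaussian‑damped inverse transform $p_\varepsilon(\boldsymbol\omega)=(2\pi)^{-n}\int_{\mathbb R^n}k(\mathbf x)e^{-i\langle\boldsymbol\omega,\mathbf x\rangle}e^{-\varepsilon\|\mathbf x\|^2/2}\,d\mathbf x$. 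Choosing $f(\mathbf u)=e^{-i\langle\boldsymbol\omega,\mathbf u\rangle-\varepsilon\|\mathbf u\|^2/4}$ in the inequality and substituting $\mathbf x=\mathbf u-\mathbf v$ shows $p_\varepsilon\ge 0$ pointwise, while a Fubini computation with the Gaussian Fourier pair gives the inversion identity $\int_{\mathbb R^n}e^{i\langle\boldsymbol\omega,\mathbf x\rangle}p_\varepsilon(\boldsymbol\omega)\,d\boldsymbol\omega=k(\mathbf x)e^{-\varepsilon\|\mathbf x\|^2/2}$; in particular the non‑negative measures $\mu_\varepsilon:=p_\varepsilon(\boldsymbol\omega)\,d\boldsymbol\omega$ all have total mass $k(\mathbf 0)$.

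The hard part will be the passage to the limit $\varepsilon\downarrow 0$. I would first prove uniform tightness of $\{\mu_\varepsilon\}$: continuity of $k$ at the origin, together with the inversion identity, bounds $\mu_\varepsilon(\{\|\boldsymbol\omega\|>R\})$ by a constant multiple of $\sup_{\|\mathbf x\|\le 1/R}\bigl(k(\mathbf 0)-\mathrm{Re}\,k(\mathbf x)e^{-\varepsilon\|\mathbf x\|^2/2}\bigr)$, which tends to $0$ as $R\to\infty$ uniformly in $\varepsilon$. By Prokhorov's theorem some sequence $\varepsilon_j\downarrow 0$ has $\mu_{\varepsilon_j}$ converging weakly to a finite non‑negative measure $\mu$ with $\mu(\mathbb R^n)=k(\mathbf 0)$; fixing $\mathbf x$ and testing against the bounded continuous function $\boldsymbol\omega\mapsto e^{i\langle\boldsymbol\omega,\mathbf x\rangle}$ gives $\int_{\mathbb R^n}e^{i\langle\boldsymbol\omega,\mathbf x\rangle}\,d\mu(\boldsymbol\omega)=\lim_j k(\mathbf x)e^{-\varepsilon_j\|\mathbf x\|^2/2}=k(\mathbf x)$, so $k$ is the Fourier transform of the non‑negative measure $\mu$. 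Injectivity of the Fourier transform on finite measures makes $\mu$ (hence the density $p$ when it exists) uniquely determined, which forces $p\ge 0$ and closes the equivalence; normalizing $\mu$ as in the converse direction then reproduces the expectation formula. Besides this limiting step, the only other point needing care is the tightness estimate — everything else is routine Gaussian and Fubini bookkeeping.
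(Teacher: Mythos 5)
The paper offers no proof of this statement at all: it is quoted as a classical preliminary and attributed to Rudin, so there is nothing internal to compare against. Your proposal is the standard textbook proof of Bochner's theorem (essentially the argument in Rudin / the L\'evy-continuity route): the easy direction by plugging the spectral representation into the quadratic form, and the hard direction by Gaussian regularization, pointwise nonnegativity of $p_\varepsilon$, the truncation/tightness estimate, Prokhorov compactness, and uniqueness of Fourier transforms of finite measures; this is correct in outline and matches the result as the paper uses it (including the normalization $k(\boldsymbol{0})=\mu(\mathbb{R}^n)$ behind the expectation formula). The only points you gloss are routine: the inequality $\int\!\!\int k(\mathbf{x}-\mathbf{y})f(\mathbf{x})\overline{f(\mathbf{y})}\,d\mathbf{x}\,d\mathbf{y}\ge 0$ must be extended from $C_c$ to the Gaussian-damped exponentials you actually use (a density/dominated-convergence step, since $k$ is bounded and continuous), and the inversion identity for $p_\varepsilon$ requires first checking $p_\varepsilon\in L^1$ (which follows from $p_\varepsilon\ge 0$ together with an approximate-identity bound giving total mass $k(\mathbf{0})$), plus the exact Gaussian constants in the damping need minor rescaling; none of these affects the argument.
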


Since both the probability distribution $p(\omega)$ and and the kernel $k$ are real, the integral is unchanged when we replace the exponential with a cosine. Leveraging this property, ~\cite{HyLa} developed a hyperbolic Laplacian feature function within hyperbolic space $\mathbb{H}_c^{d}$, yielding a hyperbolic Laplacian feature that approximates an invariance  kernel in  $\mathbb{H}_c^{d}$:
\begin{equation}
    \operatorname{HyLa}_{\lambda,b,\boldsymbol{\omega}}(\boldsymbol{z})=\exp\left(\frac{n-1}{2}\langle\omega,\boldsymbol{z}\rangle_H\right)\cos\left(\lambda\langle\boldsymbol{\omega},\boldsymbol{z}\rangle_H+b\right).
\end{equation}

~\cite{motif} generalized it to the more general Riemannian manifold. The Laplacian features of the Riemannian space are extracted by deriving the eigenfunction in the gyrovector ball $\mathbb{G}_\kappa^n$:

\begin{equation}
\label{eq:appendixegin}
\mathrm{gF}_{\boldsymbol{\omega},b,\lambda}^{\kappa}(\boldsymbol{x})=A_{\boldsymbol{\omega},\boldsymbol{x}}\cos\left(\lambda\langle\boldsymbol{\omega},\boldsymbol{x}\rangle_{\kappa}+b\right),\boldsymbol{x}\in\mathbb{G}_{\kappa}^{n},\end{equation}

where $A_{\boldsymbol{\omega},\boldsymbol{x}} = \exp\left(\frac{n-1}{2}\langle\omega,x\rangle_{\kappa}\right)$, $\langle\omega,x\rangle_\kappa=\log\frac{1+\kappa\|\boldsymbol{x}\|^2}{\|\boldsymbol{x}-\boldsymbol{\omega}\|^2}$

Using the Eq.~\eqref{eq:appendixegin}, a generalized Fourier map $\phi_{\mathrm{gF}}(\boldsymbol{x})$ can be constructed to estimate an equidistant-invariant kernel on a Riemannian space. Moreover, this kernel can be seen as a generalization of Poisson's kernel in hyperbolic space.

This kernel method can be applied to two types of features: node embeddings and feature embeddings. For node embeddings, a Riemannian feature representation must first be constructed for each individual node $z_i$. Then, the inner product $\langle\phi_{\mathrm{gF}}(\boldsymbol{z}{i}),\phi{\mathrm{gF}}(\boldsymbol{z}_{j})\rangle$ between nodes $v_i$ and $v_j$ approximates a kernel function $k(z_i, z_j)$. The optimization of $\boldsymbol{z}_i$ is driven by the goal of learning an effective kernel over the product space to support downstream tasks.
For feature embeddings, we represent normalized node feature as $\mathbf{X}\in\mathbb{R}^{n\times d}$ and inital Riemannian embedding 
as $z_i$. Then it can be calculated by $\sum_{k=1}^{d}\mathbf{X}_{ik}\phi_{\mathrm{gF}}(\boldsymbol{z}_{k})$. Its inner product between two features is:

\begin{equation}
\langle\sum_{k=1}^d\mathbf{X}_{ik}\phi_{\mathrm{gF}}(\boldsymbol{z}_k),\sum_{l=1}^d\mathbf{X}_{jl}\phi_{\mathrm{gF}}(\boldsymbol{z}_l)\rangle=\sum_{k,l=1}^d\mathbf{X}_{ik}\mathbf{X}_{jl}\langle\phi_{\mathrm{gF}}(\boldsymbol{z}_k),\phi_{\mathrm{gF}}(\boldsymbol{z}_l)\rangle.
\end{equation}

\subsection{Preliminary of Diffusion Model}
\label{app:diffusion}

Denoising Diffusion Probabilistic Models (DDPMs)~\cite{DDPM} are a class of generative models based on diffusion processes that generate high-quality samples by simulating a step-by-step denoising process, transforming noise into realistic data. The core idea of DDPMs is to construct a generative model through two complementary processes: a forward process that gradually adds noise to data, and a reverse process that learns to iteratively denoise and reconstruct the data distribution.

\textbf{Forward Process}: the forward process $q(z|x)$ is the variance-preserving Markov process:

\begin{equation}q(\mathbf{z}_{\lambda}|\mathbf{x})=\mathcal{N}(\alpha_{\lambda}\mathbf{x},\sigma_{\lambda}^{2}\mathbf{I}),\mathrm{where}\alpha_{\lambda}^{2}=1/(1+e^{-\lambda}),\sigma_{\lambda}^{2}=1-\alpha_{\lambda}^{2}\end{equation}
For intermediate steps, the transition between noise levels is given by:
\begin{equation}q(\mathbf{z}_{\lambda}|\mathbf{z}_{\lambda^{\prime}})=\mathcal{N}((\alpha_{\lambda}/\alpha_{\lambda^{\prime}})\mathbf{z}_{\lambda^{\prime}},\sigma_{\lambda|\lambda^{\prime}}^2\mathbf{I}),\mathrm{where}\lambda<\lambda^{\prime},\sigma_{\lambda|\lambda^{\prime}}^2=(1-e^{\lambda-\lambda^{\prime}})\sigma_{\lambda}^2\end{equation}

\textbf{Reverse Process}: 
The reverse process is a generative model that starts from a prior distribution $p_{\theta}(\mathbf{z}_{\lambda_{\mathrm{min}}})=\mathcal{N}(\mathbf{0},\mathbf{I})$ and reconstructs the real data distribution by iteratively denoising the data. The reverse transition is modeled as:

\begin{equation}p_\theta(\mathbf{z}_{\lambda^{\prime}}|\mathbf{z}_\lambda)=\mathcal{N}(\tilde{\boldsymbol{\mu}}_{\lambda^{\prime}|\lambda}(\mathbf{z}_\lambda,\mathbf{x}_\theta(\mathbf{z}_\lambda)),(\tilde{\sigma}_{\lambda^{\prime}|\lambda}^2)^{1-v}(\sigma_{\lambda|\lambda^{\prime}}^2)^v)\end{equation}

where $\tilde{\boldsymbol{\mu}}_{\lambda^{\prime}|\lambda}(\mathbf{z}_{\lambda},\mathbf{x})=e^{\lambda-\lambda^{\prime}}(\alpha_{\lambda^{\prime}}/\alpha_{\lambda})\mathbf{z}_{\lambda}+(1-e^{\lambda-\lambda^{\prime}})\alpha_{\lambda^{\prime}}\mathbf{x},$ and $\tilde{\sigma}_{\lambda^{\prime}|\lambda}^{2}=(1-e^{\lambda-\lambda^{\prime}})\sigma_{\lambda^{\prime}}^{2}$.

The model is trained to predict the noise at each step by minimizing the objective function, which measures the discrepancy between the predicted noise and the actual noise added during the forward process. Specifically, the training objective is formulated as:
$\mathbb{E}_{\boldsymbol{\epsilon},\lambda}\left[\|\boldsymbol{\epsilon}_\theta(\mathbf{z}_\lambda)-\boldsymbol{\epsilon}\|_2^2\right]$

To enhance the controllability and quality of generated samples, DDPMs often employ guidance mechanisms:

\textbf{Classifier Guidance}: Classifier guidance~\cite{classifier} introduces a conditional diffusion process by leveraging a pre-trained classifier $p_\theta(\mathbf{c}|\mathbf{z}_\lambda)$. The guided noise prediction is given by:

\begin{equation}\tilde{\epsilon}_{\theta}(\mathbf{z}_{\lambda},\mathbf{c})=\epsilon_{\theta}(\mathbf{z}_{\lambda},\mathbf{c})-w\sigma_{\lambda}\nabla_{\mathbf{z}_{\lambda}}\log p_{\theta}(\mathbf{c}|\mathbf{z}_{\lambda})\approx-\sigma_{\lambda}\nabla_{\mathbf{z}_{\lambda}}[\log p(\mathbf{z}_{\lambda}|\mathbf{c})+w\log p_{\theta}(\mathbf{c}|\mathbf{z}_{\lambda})],\end{equation}

where $w$ controls the strength of the guidance. This can be interpreted as:

\begin{equation}\left.\tilde{\epsilon}_\theta(\mathbf{z}_\lambda,\mathbf{c})\approx-\sigma_\lambda\nabla_{\mathbf{z}_\lambda}\left[\log p(\mathbf{z}_\lambda|\mathbf{c})\right.+w\log p_\theta(\mathbf{c}|\mathbf{z}_\lambda)\right].\end{equation}

\textbf{Classifier-free Guidance}:
Classifier-free guidance~\cite{classifierfreediffusionguidance} eliminates the need for a separate classifier by jointly training conditional and unconditional models. The guided noise prediction is computed as:

\begin{equation}
\tilde{\boldsymbol{\epsilon}}_\theta(\mathbf{z}_\lambda,\mathbf{c})=(1+w)\boldsymbol{\epsilon}_\theta(\mathbf{z}_\lambda,\mathbf{c})-w\boldsymbol{\epsilon}_\theta(\mathbf{z}_\lambda)
\end{equation}

where $w$ is a hyperparameter that controls the strength of conditional guidance. This approach simplifies the training process while maintaining high controllability.

\textbf{Manifold-Constrained Classifier-free Guidance}: ~\cite{cfg++} model condition generation as solving an inverse problem:

\begin{equation}\min_{\boldsymbol{x}\in\mathcal{M}}\ell_{sds}(x),\quad\ell_{sds}(\boldsymbol{x}):=\|\boldsymbol{\epsilon}_\theta(\sqrt{\bar{\alpha}_t}\boldsymbol{x}+\sqrt{1-\bar{\alpha}_t}\boldsymbol{\epsilon},\boldsymbol{c})-\boldsymbol{\epsilon}\|_2^2\end{equation}
This implies that the goal is to identify solutions on the clean manifold $\mathcal{M}$ that optimally aligns with the condition $c$.
The resulting sampling process from reverse diffusion is then given by

\begin{equation}\begin{array}
{rcl}x_{t-1} & = & \sqrt{\bar{\alpha}_{t-1}}\left(\hat{x}_{\varnothing}-\gamma_{t}\nabla_{\hat{x}_{\varnothing}}\ell_{sds}(\hat{x}_{\varnothing})\right)+\sqrt{1-\bar{\alpha}_{t-1}}\hat{\epsilon}_{\varnothing}.
\end{array}\end{equation}
we can equivalently write the loss as $\ell_{sds}(\boldsymbol{x})=\frac{\vec{\alpha}_{t}}{1-\bar{\alpha}_{t}}\|\boldsymbol{x}-\hat{\boldsymbol{x}}_{\boldsymbol{c}}\|^{2}$, so it can be written as:

\begin{equation}\boldsymbol{x}_{t-1}=\sqrt{\bar{\alpha}_{t-1}}\left(\hat{x}_{\varnothing}+\lambda(\hat{x}_{\boldsymbol{c}}-\hat{x}_{\varnothing})\right)+\sqrt{1-\bar{\alpha}_{t-1}}\hat{\boldsymbol{\epsilon}}_{\varnothing}\end{equation}

\section{Experiment details}
\label{app:expdetail}
\textbf{Diffusion Process}. In all experiments, we employ a diffusion process with $T = 1000$ diffusion steps, diffusion steps, parameterized by a linear schedule for $\alpha_t$ and a corresponding decay for $\bar{\alpha}_{\boldsymbol{t}}$. For inference, we adopt the DDPM framework. 

\textbf{Model Architechture}. For the node-level task, including all node classification datasets, we use an MPNN-centric model as the encoder, which consists of GCN, GIN and GAT. Typically, we use a 5-layer architecture with residual connections and normalization layers to facilitate optimization. The optimizer is AdamW, with a learning rate of 1e-3 and a weight decay of 1e-5. The dimension of final hidden layer is set to 4. Since the node classification task only considers the node features, we only study the reconstruction loss and Riemann decoupling of the node features. Here, we initialize the curvature of a product manifold consisting of three gyroscopic space vectors with curvature -1,0,1, respectively, with dimension 4. The random Laplacian map is then calculated separately and the underlying spatial features are transformed into Riemannian Spaces. Finally, they are weighted by a linear layer and then decoded by a linear layer.

For graph-level tasks, such as graph generation and regression, we employ edge-enhanced graph transformers as the backbone network, incorporating position embeddings to capture structural information. For the decoder, we design a Riemannian decoupling layer for each subtask, including node-level reconstruction, edge-level reconstruction, and graph-level property reconstruction. After the Riemannian decoupling layers, a linear layer aggregates representations from each Riemannian space, and the final results are produced through an additional linear layer. The optimizer is AdamW, with a learning rate of 1e-4 and a weight decay of 1e-6. The dimension of final hidden layer is set to 16 or 32.

\begin{table}[t]
    \centering
    \normalsize
    \caption{Overview of the datasets and metrics used in the paper.}
    \label{tab:datasets}
    \begin{tabular}{lcccccc}
        \toprule
        {Dataset} & {\#Graphs} & {Avg. \#nodes} & {Avg. \#edges} & {Prediction Level} & {Task} & {Metric} \\
        \midrule
        QM9 & 130,000 & 18.0 & 37.3 & graph & regression & Mean Absolute Error \\
        ZINC & 12,000 & 23.2 & 24.9 & graph & regression & Mean Absolute Error \\
        Cora & 1 & 2708 & 10,556 & node & 7-way classification & Accuracy \\
        PubMed & 1 & 19717 & 88648 &node & 3-way classification  & Accuracy \\
        Photo & 1 & 7650 & 238,162 &node & 8-way classification  & Accuracy \\
        Physics & 1 & 34,493 & 495,924 &node & 5-way classification  & Accuracy \\
        \bottomrule
    \end{tabular}
\end{table}

\section{Experiment Analysis}
\label{app:kmeans_sub}

% \begin{figure}\label{fig:zinc}
%     \centering
%     \includegraphics[scale=0.4]{fig/zinc.png} % Scale the image by 60%
%     \caption{MAE during the training in graph regression.}
%     \label{fig:framework}
% \end{figure}

% To better analyze the impact of our Riemannian decoupler, we conduct a detailed evaluation on the ZINC12 graph regression task. Figure~\ref{fig:zinc} reports the convergence speed of MAE during model training. We compare our approach with LGD, which can be considered an RDLD variant without Riemannian decoupling. Experimental parameters follow the optimal settings reported in LGD. The Riemannian decoupler initializes a Riemannian vector for each feature with curvatures in [-1,0,1] respectively.
% The results show that our model converges significantly faster, reaching the accuracy of LGD trained for 1000 epochs within just 200 epochs. Additionally, our final convergence results outperform LGD. We attribute this improvement to the Riemannian decoupler, which effectively disentangles each feature onto an appropriate product manifold, facilitating more efficient learning.
\subsection{Analysis of Riemannian Autoencoder}

\begin{figure}\label{fig:zinc}
    \centering
    \includegraphics[scale=0.4]{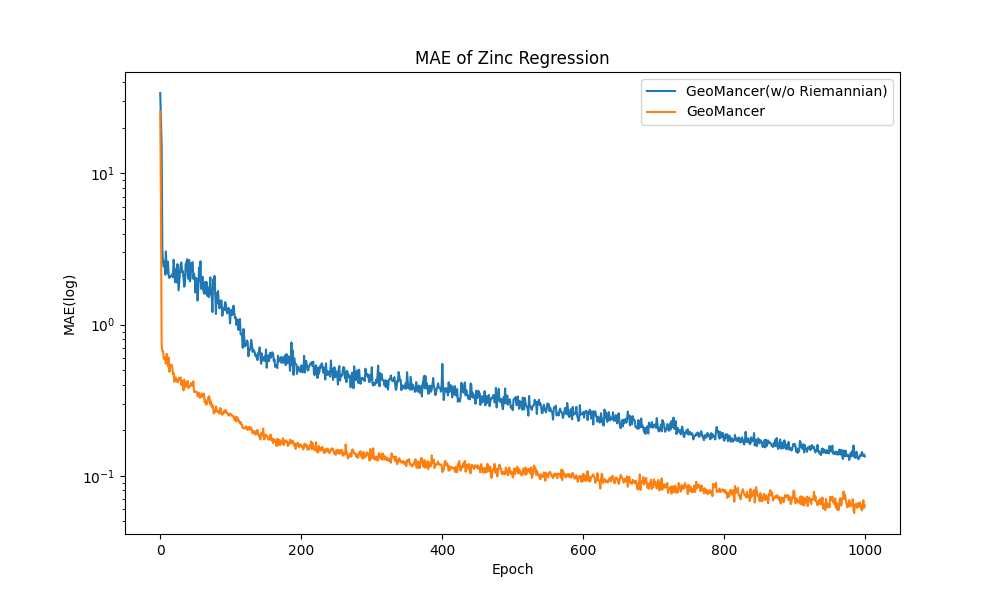} % Scale the image by 60%
    \caption{MAE during the training in graph regression.}
    \label{fig:framework}
\end{figure}

To better analyze the impact of our Riemannian autoencoder, we conduct a detailed evaluation on the ZINC12 graph regression task. Figure~\ref{fig:zinc} reports the convergence speed of MAE during model training. We compare our approach with the model without Riemannian block. The learning rate setting of the optimizer remains consistent, all being 1e-5.
The results show that the convergence speed of this model is significantly accelerated. Furthermore, our final convergence result is superior to removing the Riemannian block. We attribute this improvement to the Riemannian decoupler, which effectively decouples each feature onto the appropriate product manifold, thereby promoting more efficient learning.

\section{Limitation}
Due to the limitation of computing resources, our model is not large enough and lacks the verification of the scaling law of the diffusion method on graph tasks.

\section{Broader Impact}
 
This paper aims to advance the field of graph generation technologies. Our work contributes to the field of Machine Learning and has many potential societal consequences. It may play an important role in understanding fields such as drug generation and recommendation systems based on graph structures from a geometric perspective. However, we believe that there are no negative impacts that need clarification.

% \end{document}
\end{document}